\def\given{\,|\,}
\def\tr{\mathop{\text{tr}}\kern.2ex}
\long\def\comment#1{}
\def\tr{\mathop{\text{Tr}}}
\def\cS{{\mathcal{S}}}
\newcommand{\bel}{\begin{eqnarray}\label}
\newcommand{\eel}{\end{eqnarray}}
\newcommand{\bes}{\begin{eqnarray*}}
\newcommand{\ees}{\end{eqnarray*}}
\def\##1\#{\begin{align}#1\end{align}}
\def\$#1\${\begin{align*}#1\end{align*}}
\title{\huge  Breaking the Curse of Many Agents:\\Provable Mean Embedding $Q$-Iteration for Mean-Field Reinforcement Learning}
\author
{
\normalsize Lingxiao Wang\thanks{Northwestern University; \texttt{lwang@u.northwestern.edu}}
\qquad
\normalsize Zhuoran Yang\thanks{Princeton University; \texttt{zy6@princeton.edu}}
\qquad
\normalsize Zhaoran Wang\thanks{Northwestern University; \texttt{zhaoranwang@gmail.com}}
}
\date{\today}
\begin{document}
\maketitle

\begin{abstract}
Multi-agent reinforcement learning (MARL) achieves significant empirical successes. However, MARL suffers from the curse of many agents. In this paper, we exploit the symmetry of agents in MARL. In the most generic form, we study a mean-field MARL problem. Such a mean-field MARL is defined on mean-field states, which are distributions that are supported on continuous space. Based on the mean embedding of the distributions, we propose MF-FQI algorithm that solves the mean-field MARL and establishes a non-asymptotic analysis for MF-FQI algorithm. We highlight that MF-FQI algorithm enjoys a ``blessing of many agents'' property in the sense that a larger number of observed agents improves the performance of MF-FQI algorithm.
\end{abstract}


\section{Introduction}\label{sec::intro}

Reinforcement learning (RL) \citep{sutton2018reinforcement} searches for the optimal policy for sequential decision making through interacting with environments and learning from experiences. Multi-agent reinforcement learning (MARL) \citep{bu2008comprehensive} generalizes   RL to   multi-agent systems. For competitive tasks such as zero-sum game and general-sum game, various  MARL  algorithms \citep{littman1994markov, hu2003nash, wang2003reinforcement} are proposed  in search   for the  Nash equilibrium \citep{nash1951non}. Meanwhile, for cooperative tasks, MARL searches for the optimal policy that maximizes the social welfare \citep{ng1975bentham}, i.e., the expected total reward obtained by all agents \citep{tan1993multi, panait2005cooperative, wang2003reinforcement, claus1998dynamics, lauer2000algorithm, dvzeroski2001relational, guestrin2002coordinated, kar2013cal, zambaldi2018relational}. 
 Combined  with the breakthrough in deep learning, MARL achieves significant empirical successes  in both settings, e.g.,  
autonomous driving \citep{shalev2016safe}, Go \citep{silver2016mastering, silver2017mastering},  esports \citep{alphastarblog, OpenAI_dota}, and robotics \citep{yang2004multiagent}. 

Despite its empirical successes, MARL remains challenging  in the ``many-agent'' setting, as the capacity of state-action space grows exponentially in the number of agents, which hinders the learning of value function and policy due to the curse of dimensionality. Such a challenge is named as the ``curse of many agents''. One way to break such a curse is through mean-field approximation, which exploits the symmetry of homogeneous agents and summarizes them as a population. In the most general form, such a population is represented by a distribution over the state space of individual agents, while the reward and transition are parametrized as functionals of distributions \citep{acciaio2018extended}. Although mean-field MARL demonstrates remarkable efficiency in applications such as large-scale fleet management \citep{lin2018efficient} and ridesharing order dispaching \citep{li2019efficient}, its theoretical analysis remains scarce. In particular, despite significant progress \citep{yang2017learning, yang2018mean, jiang2018learning, guo2019learning}, we still lack a principled model-free algorithm that allows individual agents to have continuous states, which requires approximating nonlinear functionals of infinite-dimensional mean-field states, e.g., value function and policy. 


In this paper, we study mean-field MARL in the collaborative setting, where the mean-field states are distributions over a continuous space $\cS$. Here $\cS$ denotes the state space of individual agents. In particular, we consider the setting with a centralized controller, which has a finite action space $\cA$. Such a setting is extensively studied in the analysis of societal-scale systems \citep{gueant2011mean, gomes2015economic, moll2019uneven}. As a simplified example, the central bank or the central government decides whether to raise the interest rate or reduce the fiscal budget, respectively, both with the goal of maximizing social welfare. In such an example, the action space only contains two actions. However, the action taken by the centralized controller affects the dynamics of billions of individuals. Such a setting can be viewed as centralized mean-field control \citep{huang2012social, carmona2013control, fornasier2014mean} with an infinite number of homogeneous agents, which faces two challenges: (i) learning the value function and policy is intractable as they are functionals of distributions, which are infinite dimensional as $\cS$ is continuous, and (ii) the mean-field state is only accessible through the observation of a finite number of agents, which only provides partial information.~To tackle these challenges, we resort to the mean embedding of mean-field states into a reproducing kernel Hilbert space (RKHS) \citep{gretton2007kernel, smola2007hilbert, sriperumbudur2010hilbert}, which allows us to parametrize value functions as nonlinear functionals over the RKHS. Based on value function approximation, we propose the mean-field fitted Q-iteration algorithm (MF-FQI), which provably attains the optimal value function at a linear rate of convergence. In particular, we show that MF-FQI breaks the curse of many agents in the sense that its computational complexity only scales linearly in the number of observed agents, and moreover, the statistical accuracy enjoys a ``blessing of many agents'', that is, a larger number of observed agents improves the statistical accuracy. Moreover, we characterize the phase transition in the statistical accuracy in terms of the batch size in fitted Q-iteration and the number of observed agents. 


\vskip4pt
\noindent{\bf Our Contribution.} Our contribution is three-fold: (i) We propose the first model-free mean-field MARL algorithm, namely MF-FQI, that allows for continuous support with provable guarantees.~(ii) We prove that MF-FQI breaks the curse of many agents by establishing its nonasymptotic computational and statistical rates of convergence. (iii) We motivate a principled framework for exploiting the invariance in MARL, e.g., exchangeability, via mean embedding.

\vskip4pt
\noindent{\bf Related Works.} Our work is related to mean-field games and mean-field control. The study of mean-field games focuses on the search of the Nash equilibrium \citep{huang2003individual, lasry2006jeux, lasry2006jeux1, lasry2007mean, huang2007large, gueant2011mean, carmona2018probabilistic}, whereas the goal of mean-field control is to  optimally control a McKean-Vlasov process \citep{buckdahn2009mean, buckdahn2011general, andersson2011maximum,meyer2012mean, bensoussan2013mean, carmona2015forward, acciaio2018extended}. Most of these works focus on the continuous-time setting and require the knowledge of the transition model. In contrast, we consider the model-free and discrete-time setting.

Our work falls in the study of mean-field MARL, which generalizes finite-agent MARL by incorporating the notion of mean fields. Previous works investigate mean-field MARL in both cooperative and competitive settings \citep{yang2017learning, yang2018mean,jiang2018learning, guo2019learning}. In \cite{yang2017learning}, a similar setting of mean-field MARL is studied, where the mean-field states are supported on a discrete space and the transition is linear in the state and action. In contrast, our work is model-free and allows for continuous support. In \cite{yang2018mean, guo2019learning}, mean-field MARL algorithms are proposed in the competitive setting, which have provable guarantees when the support is discrete. In comparison, we consider the cooperative setting with continuous support and establish nonasymptotic guarantees. 



Our work exploits the exchangeability of agents via mean embedding. See e.g.,  \cite{smola2007hilbert, fukumizu2008kernel, gretton2009covariate, sriperumbudur2010hilbert, gretton2012kernel, tolstikhin2017minimax} and references therein for the study of mean embedding. Our work is closely related to various statistical models that exploit invariance, such as set kernels \citep{haussler1999convolution, gartner2002multi, kondor2003kernel} and deep sets \citep{zaheer2017deep}. We refer to \cite{bloem2019probabilistic} for a detailed survey on learning with invariance.


\vskip4pt
\noindent{\bf Notations.} For a topological space $X$, we denote by $\cC_B(X)$ the set of bounded and continuous real functions on $X$. We denote by $\cM(X)$ the space of all the probability distributions supported on $X$. For $x \in X$, we denote by $\delta_x \in \cM(X)$ the point mass at $x$. For a real-valued function $f$ defined on $X$, we denote by $\|f\|_{p, \nu}$ the $L_p(\nu)$ norm for $p\geq 1$, where $\nu\in\cM(X)$. We write $\|f\|_{\nu}=\|f\|_{2, \nu}$ for notational simplicity.


\section{Mean-Field MARL via Mean Embedding}
\label{sec::background}

In this section, we first motivate mean-field MARL by an example of $N$-player control with invariance. We then introduce the problem setup of mean-field MARL and the mean embedding of distributions. Finally, we propose the MF-FQI algorithm, which solves mean-field MARL based on the mean embedding.

\subsection{Exchangeability in MARL}
\label{sec::n_player_sym}
We first consider an $N$-player control problem with a centralized controller in discrete time. 
Such a setting is extensively studied in the analysis of societal-scale systems \citep{gueant2011mean, gomes2015economic, moll2019uneven}, such as the example of central bank or central government in \S\ref{sec::intro}. At each time step $t$, the central controller takes an action $a_{t}\in\cA$ based on the current joint state $s_t = (s_{1, t}, \ldots, s_{N, t})$, where $s_{i, t} \in \cS$ is the state of the $i$-th agent at time $t$. The immediate reward $r_t$ follows a distribution that depends on the current state $s_t \in \cS^N$ and action $a_t\in\cA$. The transition of the joint state follows a distribution, which is determined by the current state $s_t \in \cS^N$ and action $a_t\in \cA$. In summary, it holds that
\#\label{eq::n_player_mdp_invar}
r_t \sim r(s_t, a_t), \quad S_{t+1} \sim P(\cdot~|~s_{t}, a_t).
\#
The process defined by the tuple $(\cS, \cA, P, r)$ is a Markov decision process (MDP). We define a policy $\pi:\cS^N \mapsto \cM(\cA)$ as a mapping that maps a joint state $s \in \cS^N$ to a probability distribution $\pi(\cdot~|~s)$ over $\cA$. We define the value function corresponding to the policy $\pi$ as  
\#\label{eq::def_V_nplayer}
V^\pi(s) = \EE\biggl[\sum^\infty_{t = 0}\gamma^t\cdot r(S_t, A_t)~\biggl|~ S_0 = s\biggr],
\#
where $A_t \sim \pi(\cdot|S_t)$, and $S_{t+1}\sim \PP(\cdot\given S_t, A_t)$, and $\gamma \in (0, 1)$ is the discount factor. Similarly, we define the action-value function corresponding to the policy $\pi$ as follows,
\#\label{eq::def_Q_nplayer}
Q^\pi(s, a) = \EE\biggl[\sum^\infty_{t = 0}\gamma^t\cdot r(S_t, A_t)~\biggl|~ S_0 = s, A_0 = a\biggr],
\#
where $A_t \sim \pi(\cdot\given S_t)$, and $S_{t+1}\sim \PP(\cdot\given S_t, A_t)$. We define the Bellman operator $T^\pi$ as follows,
\#
T^\pi Q(s, a) = \EE\bigl[r(s, a) + \gamma\cdot Q(S', A')\bigr],
\#
where $S'\sim P(\cdot\given  s, a)$ and $A'\sim \pi(\cdot\given s)$. Our goal is to find the optimal policy that maximizes the expected total reward as follows,
\#\label{eq::def_qstar_nplayer}
Q^*(s, a) = \sup_{\pi} Q^\pi(s, a), \quad \forall (s, a)\in\cS\times\cA.
\#
We denote by $\pi^*$ the optimal solution of \eqref{eq::def_qstar_nplayer}. 
It can be shown that $Q^* = Q^{\pi^*}$, and the following Bellman optimality equation holds,
\#\label{eq::def_bellman_optimal}
Q^*(s, a) &= T Q^*(s, a) =\EE\Bigl[r(s, a) + \gamma\cdot\max_{a\in\cA} Q^*(S', a)\Bigr],
\#
where $S' \sim P(\cdot\given s, a)$. Here we call $T$ the Bellman optimality operator. 
\vskip4pt
\noindent{\bf Curse of Many Agents: }
The learning of the optimal action-value function $Q^*$ under the $N$-player setting suffers from the curse of many agents. More specifically, as $N$ increases, the capacity of the joint state space $\cS^N$ grows exponentially in $N$ and incurs intractability in the learning of the action-value function $Q$. To address such a curse, we exploit the exchangeability of the MDP in \eqref{eq::n_player_mdp_invar}. More specifically, we assume that the MDP is exchangeable in the sense that
\#\label{eq::n_player_mdp_invar1}
&r(s_t, a_t) \stackrel{\ud}{=} r\bigl(\sigma(s_t), a_t\bigr), \qquad P(s_{t+1}~|~s_{t}, a_t) = P\bigl(\sigma(s_{t+1})~\bigl|~\sigma(s_{t}), a_t\bigr),
\#
which holds for any $s_t, s_{t+1} \in \cS^N$, $a_t \in \cA$, and $\sigma \in \SSS_N$. Here $\sigma$ is a block-wise permutation of the vector $s_t\in\cS^N$, and $\SSS_N$ is the permutation group of order $N$. Under the exchangeability defined in \eqref{eq::n_player_mdp_invar1}, the following proposition shows that the optimal policy is invariant to permutations of the joint state.
\begin{proposition}[Invariance of $Q^*$]
\label{prop::inv_pi}
If \eqref{eq::n_player_mdp_invar1} holds, then it holds for any $\sigma \in \SSS_N$, $s \in \cS^N$, and $a \in \cA$ that
\#
Q^*(s, a) = Q^*\bigl(\sigma(s), a\bigr).
\#
Moreover, it holds for any $\sigma \in \SSS_n$, $s \in \cS^N$, and $a \in \cA$ that $\pi^*(a\given s) \stackrel{\ud}{=} \pi^*(a\given \sigma(s))$.
\end{proposition}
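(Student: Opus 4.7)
The plan is to leverage the Bellman optimality equation \eqref{eq::def_bellman_optimal} together with the contraction property of the Bellman operator $T$. I would introduce the closed subspace $\mathcal{F}_{\mathrm{inv}}$ of bounded measurable action-value functions $Q : \cS^N \times \cA \to \real$ that are permutation-invariant in the state argument, i.e.\ $Q(s,a) = Q(\sigma(s), a)$ for every $\sigma \in \SSS_N$. Since $T$ is a $\gamma$-contraction on bounded functions with unique fixed point $Q^*$, it suffices to show that $T$ maps $\mathcal{F}_{\mathrm{inv}}$ into itself: iterating $T$ from the zero function (which trivially lies in $\mathcal{F}_{\mathrm{inv}}$) and passing to the sup-norm limit then forces $Q^* \in \mathcal{F}_{\mathrm{inv}}$, which is exactly the first claim of the proposition. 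Closedness of $\mathcal{F}_{\mathrm{inv}}$ under uniform limits is immediate, since a pointwise limit of permutation-invariant functions remains permutation-invariant.

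The main step is therefore to verify stability of $\mathcal{F}_{\mathrm{inv}}$ under $T$. Fix $Q \in \mathcal{F}_{\mathrm{inv}}$, $(s,a) \in \cS^N \times \cA$, and $\sigma \in \SSS_N$. The reward invariance in \eqref{eq::n_player_mdp_invar1} gives $\EE[r(s,a)] = \EE[r(\sigma(s), a)]$ directly. For the transition term, I would translate the symbolic identity $P(s_{t+1}\,|\,s_t, a_t) = P(\sigma(s_{t+1})\,|\,\sigma(s_t), a_t)$ into the law-level identity that, if $S' \sim P(\cdot\,|\, s, a)$, then $\sigma(S') \sim P(\cdot\,|\, \sigma(s), a)$. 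Combining this with the permutation-invariance of $Q$ yields
\$
\EE\Bigl[\max_{a' \in \cA} Q(S'', a')\Bigr] = \EE\Bigl[\max_{a' \in \cA} Q(\sigma(S'), a')\Bigr] = \EE\Bigl[\max_{a' \in \cA} Q(S', a')\Bigr],
\$
where $S'' \sim P(\cdot\,|\, \sigma(s), a)$ and $S' \sim P(\cdot\,|\, s, a)$. Summing the reward and transition contributions gives $TQ(\sigma(s), a) = TQ(s, a)$, so $TQ \in \mathcal{F}_{\mathrm{inv}}$.

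For the second claim about $\pi^*$, I would invoke the standard fact that any greedy policy with respect to $Q^*$ is optimal. Since $Q^*(\sigma(s), a) = Q^*(s, a)$ for every $a \in \cA$, the set $\arg\max_a Q^*(s, a)$ is invariant under permutation of $s$, so any policy placing mass on this set satisfies $\pi^*(\cdot\,|\,s) \stackrel{\ud}{=} \pi^*(\cdot\,|\,\sigma(s))$. The main obstacle I expect is the careful change of variables needed to upgrade the symbolic exchangeability of the transition kernel in \eqref{eq::n_player_mdp_invar1} to the distributional identity $\sigma(S') \sim P(\cdot\,|\,\sigma(s), a)$; once that identity is in place, the rest reduces to a routine application of the contraction mapping principle.
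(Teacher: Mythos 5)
Your proposal is correct and rests on the same ingredients as the paper's proof: the upgrade of \eqref{eq::n_player_mdp_invar1} to the distributional identity $S'' \stackrel{\ud}{=} \sigma(S')$ with $S''\sim P(\cdot\given\sigma(s),a)$ and $S'\sim P(\cdot\given s,a)$, combined with uniqueness of the fixed point of the Bellman optimality operator $T$. The paper packages the fixed-point step by verifying that $f(s,a)=Q^*(\sigma^{-1}(s),a)$ itself satisfies the Bellman optimality equation and invoking uniqueness, whereas you propagate permutation-invariance through value iteration on the closed invariant subspace; the two routes are interchangeable.
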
 
\begin{proof}
See \S\ref{sec::pf_inv_pi} for a detailed proof.
\end{proof}
Meanwhile, the following proposition proves that the action-value function $Q^\pi$ is invariant to permutations of the joint states.
\begin{proposition}[Invariant Representation]
\label{thm::inv::rep}
Let $\pi$ be invariant to permutations such that $\pi(\cdot\given s) \stackrel{\ud}{=} \pi(\cdot\given \sigma(s))$.  If \eqref{eq::n_player_mdp_invar1} holds, then it holds for some $g:\cM(\cS)\times \cA \mapsto \RR$ that
\#\label{eq::q_inv_nplayer}
Q^\pi(s, a) = g(M_s, a).
\#
Here $M_s(\cdot)$ is the empirical measure supported on the set $\{s_i\}_{i\in[N]}$ corresponding to $s = (s_1, \ldots, s_N)$, which takes the form of
\#\label{eq::def_emp_s}
M_s= \frac{1}{N}\sum^N_{i = 1}\delta_{s_i},
\#
where recall that $\delta_{s_i}$ is the point mass at $s_i \in\cS$ for all $i\in[N]$.
\end{proposition}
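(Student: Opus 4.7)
The plan is to deduce the representation in two stages: first show that $Q^\pi$ itself is permutation-invariant, then invoke the fact that the empirical measure $M_s$ is a complete invariant for the action of $\SSS_N$ on $\cS^N$.

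For the first stage, I would argue by a trajectory coupling. Fix any $\sigma\in\SSS_N$ and let $(S_t,A_t)_{t\geq 0}$ denote the process with $S_0=s$, $A_0=a$, $A_t\sim\pi(\cdot\given S_t)$, and $S_{t+1}\sim P(\cdot\given S_t,A_t)$. Using the exchangeability relations in \eqref{eq::n_player_mdp_invar1} together with the assumed permutation-invariance of $\pi$, one checks inductively that the process $(\sigma(S_t),A_t)_{t\geq 0}$ has the same finite-dimensional distributions as the process starting from $\sigma(s)$: the reward kernel satisfies $r(\sigma(S_t),A_t)\stackrel{\ud}{=}r(S_t,A_t)$; the transition satisfies $P(\sigma(S_{t+1})\given \sigma(S_t),A_t)=P(S_{t+1}\given S_t,A_t)$; and the policy satisfies $\pi(A_t\given \sigma(S_t))\stackrel{\ud}{=}\pi(A_t\given S_t)$. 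Since rewards are identically distributed along the coupled trajectories, taking the discounted sum and then expectation yields $Q^\pi(s,a)=Q^\pi(\sigma(s),a)$ for every $\sigma\in\SSS_N$. (An alternative route is to run an induction on a finite-horizon truncation $Q^\pi_H$ via the Bellman recursion $Q^\pi_{H+1}(s,a)=\EE[r(s,a)+\gamma\sum_{a'}\pi(a'\given S')Q^\pi_H(S',a')]$, passing to the limit $H\to\infty$ by uniform convergence under $\gamma<1$.)

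For the second stage, the key observation is that the map $s\mapsto M_s$ is a complete invariant for the block-wise $\SSS_N$ action: $M_s=M_{s'}$ holds in $\cM(\cS)$ if and only if $s'=\sigma(s)$ for some $\sigma\in\SSS_N$. The ``if'' direction is immediate from \eqref{eq::def_emp_s}; the ``only if'' direction follows because $M_s$ is a purely atomic probability measure whose atoms (counted with multiplicity) uniquely recover the unordered tuple $\{s_1,\ldots,s_N\}$, which is all the information an ordered $s\in\cS^N$ carries modulo permutation. Consequently, we may define $g:\mathrm{Im}(s\mapsto M_s)\times\cA\to\RR$ by $g(M_s,a):=Q^\pi(s,a)$, extend it arbitrarily to all of $\cM(\cS)\times\cA$, and by the first stage this prescription is unambiguous. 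This gives \eqref{eq::q_inv_nplayer}.

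The only delicate point is the inductive verification in stage one, in particular making sure that the permutation invariance of $\pi$ is propagated through the Bellman recursion. Everything else is bookkeeping: exchangeability of $r$ and $P$ plus invariance of $\pi$ at each time step combine multiplicatively, so the joint law of the permuted trajectory matches that of the unpermuted one and the expected discounted rewards agree. The transition from invariance to the empirical-measure representation is then essentially a labeling argument and does not require any additional regularity on $\cS$, since the representation is only asserted pointwise rather than, say, continuously in $M_s$.
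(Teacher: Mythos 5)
Your proof is correct, and it takes a genuinely different route from the paper's in its second half. The paper also begins from the observation that the discounted return is unaffected by permuting $S_0$ (it asserts this without the inductive coupling you spell out), but it then works at the level of the \emph{random} return $R(S_0,a)$: it invokes Theorem 12 of \cite{bloem2019probabilistic}, a representation theorem for conditionally invariant random variables, to write $R(S_0,a)=f_a(\eta, M_{S_0})$ with $\eta\sim\mathrm{Unif}[0,1]$ independent noise, and only then takes the expectation over $\eta$ to obtain $g$. You instead pass to the expectation first, establish the deterministic identity $Q^\pi(s,a)=Q^\pi(\sigma(s),a)$ via the trajectory coupling, and then use the elementary fact that $s\mapsto M_s$ is a maximal invariant for the block-wise $\SSS_N$ action, so that any permutation-invariant function of $s$ factors through $M_s$. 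Since the proposition only asserts a pointwise representation of the deterministic function $Q^\pi$, your factorization argument suffices and avoids the external representation theorem entirely; what the paper's heavier machinery buys is the stronger distributional statement about the return itself, which is then discarded by averaging. Your stage-one coupling is also a useful addition, since it is precisely the step the paper leaves implicit, and your handling of well-definedness of $g$ on the image of $s\mapsto M_s$ (via the ``only if'' direction of the complete-invariant claim) is the right way to make the labeling argument rigorous.
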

\begin{proof}
See \S\ref{sec::pf_inv_rep} for a detailed proof.
\end{proof}
By Propositions  \ref{prop::inv_pi} and \ref{thm::inv::rep}, the optimal action-value function $Q^*$ is related to the joint state through the empirical state distribution $M_s$ defined in \eqref{eq::def_emp_s}. When the number of agents $N$ goes to infinity, the empirical state distribution $M_s$ converges to a limiting continuous distribution. 
To capture such a limiting dynamics of infinite agents with exchangeability, we define an MDP with $\cM(\cS)$, the space of probability measures supported on $\cS$, as the mean-field state space as follows. 
\vskip4pt
\noindent{\bf Mean-Field MARL. } We define the discounted mean-field MDP by the tuple $(\cM(\cS), \cA, P, r, \gamma)$. Here $\cA$ is the action space and $\cM(\cS)$ is the mean-field state space, which is the space of all the distributions supported on $\cS$. Given a mean-field state $p_s \in \cM(\cS)$ and an action $a\in\cA$, the immediate reward follows the distribution $r(a, p_s)$, where $r: \cA\times\cM(\cS) \mapsto \cM(\RR)$. The Markov kernel $P(\cdot ~|~ \cdot)$ maps the action-state pair $(a, p_s)$ to a distribution on $\cM(\cS)$, which is the distribution of the mean-field state after transition from the action-state pair $(a, p_s)$. For a policy $\pi(\cdot\given p_s)$ that maps from $\cM(\cS)$ to $\cM(\cA)$, we define the action-value function as
\#\label{eq::q_mean_field}
Q^\pi(a, p_s) = \EE\biggl[\sum^\infty_{t = 0}\gamma^t\cdot r(P_{s, t}, A_t)~\biggl|~ P_{s, 0} = p_s, A_0 = a\biggr],
\#
where $A_t \sim \pi(\cdot\given  P_{s, t})$ and $P_{s, t+1} \sim P(\cdot\given a, p_{s, t})$. Correspondingly, we define the Bellman evaluation operator $T^\pi$ as follows,
\#
T^\pi Q(a, p_s) = \EE\bigl[r(a, p_s) + \gamma\cdot Q(A', P_{s'})\bigr],
\#
where $P_{s'} \sim P(\cdot\given a, p_s)$ and $A'\sim \pi(\cdot\given a, p_s)$. We define the optimal action-value function as $Q^* = \sup_{\pi} Q^\pi$. The Bellman optimality equation then takes the form of
\#\label{eq::q_optimal_mean_field}
Q^*(a, p_s) &= T Q^*(a, p_s)=\EE\Bigl[r(a, p_s) + \gamma\cdot\max_{a\in\cA} Q^*(a, P'_s)\Bigr],
\#
where $P_{s'} \sim P(\cdot\given a, p_s)$. We write $\Omega = \cS\times\cA$ and define the space of state-action configurations $\tilde\cM(\Omega)$ as follows,
\#\label{eq::def_cmomega}
&\tilde\cM(\Omega) = \bigl\{\omega_{a, p_s} = \delta_a\times p_s: a \in \cA, p_s \in \cM(\cS)\bigr\}.
\#
Here we denote by $\delta_a \in\cM(\cA)$ the point mass at action $a \in \cA$ and by $\delta_a\times p_s$ the product measure on $\Omega = \cS\times\cA$ induced by $\delta_{a}$ and $p_s$. See \S\ref{sec::topo_str} for the definition of a topological structure that allows us to define distributions on $\tilde\cM(\Omega)$. Note that the transition kernel $P(\cdot|a, p_s)$ equivalently defines a Markov kernel from $\tilde\cM(\Omega)$ to $\cM(\cS)$. With a slight abuse of notations, we denote by $P(\cdot|\omega_{a, p_s})$ such a Markov kernel and do not distinguish between them. Similarly, we denote by $r(\omega)$ and $Q(\omega)$ the immediate reward and action-value function defined on the state-action configuration $\omega\in\tilde\cM(\Omega)$, respectively. We assume that the action set $\cA$ is finite, and the immediate reward is upper bounded by a positive absolute constant $R_{\max}$. It then holds that the action-value functions are upper bounded by $Q_{\max} = R_{\max}/(1 - \gamma)$.

In a multi-agent environment with infinite homogeneous agents and continuous state space $\cS$, we cannot access the mean-field state $p_s$ directly. Instead, we assume that we observe the states of $N$ agents that follows the mean-field state $p_s$. In what follows, we construct an algorithm that solves the mean-field MARL via such a finite observation for each mean-field state. In the sequel, we denote by $\hat Q^\lambda_\kappa$ and $\pi_\kappa$ the outputs of MF-FQI.

We highlight that the mean-field MARL setting faces two challenges: (i) learning the value function and policy is intractable as they are functionals of distributions, which are infinite dimensional as $\cS$ is continuous, and (ii) the mean-field state is only accessible through the observation of a finite number of agents, which only provides partial information. In what follows, we tackle these challenges via mean embedding.

\subsection{Mean Embedding}
\label{sec::mean_embedding}
To learn the optimal action-value function $Q^*$ defined on $\tilde\cM(\Omega)$, which is a space of distributions, we introduce mean embedding, which embeds the space of distributions to a reproducing kernel Hilbert space (RKHS). We denote by $\cH(k)$ the RKHS with reproducing kernel $k:\Omega\times\Omega \mapsto \RR$. For any state-action configuration $\omega \in \tilde\cM(\Omega)$, the mean embedding $\mu_\omega(\cdot)$ of $\omega$ into the RKHS $\cH(k)$ is defined as follows \citep{gretton2007kernel,smola2007hilbert, sriperumbudur2010hilbert},
\#\label{eq::def_mean_emb}
\mu_{\omega}(x) = \int_{\Omega} k(x, t) \ud \omega(t) \in \cH(k).
\#
Let $\cX = \{\mu_\omega: \omega \in \tilde\cM(\Omega)\}\subseteq \cH(k)$. To tackle challenge (i), we introduce another reproducing kernel $K: \cX\times \cX\mapsto \RR$. Such a kernel generates an RKHS $\cH(K)$ that include functions defined on $\cX$. Our idea is then to approximate $Q^*$ using functions in $\cH(K)$. Note that upon a proper selection of kernel $K(\cdot, \cdot)$, the corresponding RKHS $\cH(K)$ captures a rich family of functions defined on $\cX$. As an example, for universal kernels such as the radial basis function (RBF) kernel, the corresponding RKHS is dense in $\cC(\cX)$. To further regulate the behavior of mean embedding and RKHS, we introduce the following regularity conditions on kernels $k(\cdot, \cdot)$ and $K(\cdot, \cdot)$.
\begin{assumption}[Regularity Condition of Kernels]
\label{asu::bound_kernel}
We assume that the kernel $k(\cdot, \cdot)$ and $K(\cdot, \cdot)$ are continuous and bounded as follows,
\#\label{asu::bound_kernel_1}
&k(u, u)\leq \varrho, \quad \forall u \in \Omega,\qquad K(\mu_{\omega}, \mu_{\omega})\leq \varsigma, \quad \forall \omega \in \tilde\cM(\Omega),
\#
where $\varrho$ and $\varsigma$ are positive absolute constants. We assume that $k(\cdot, \cdot)$ is universal and the mean embeddings $\mu_\omega(\cdot)$ are continuous for any $\omega \in \tilde\cM(\Omega)$. We further assume that $K(\cdot, \mu_\omega)$ is H\"older continuous such that for any $x, y \in\tilde\cM(\Omega)$, it holds that
\#
\|K(\cdot, \mu_x) - K(\cdot, \mu_y)\|_{\cH(K)} \leq L\cdot\| \mu_x - \mu_y\|^h_{\cH(k)},
\#
where $L$ and $h$ are positive absolute constants.
\end{assumption}
The assumption on the boundedness of the kernels in \eqref{asu::bound_kernel_1} is a standard assumption in the learning with kernel embedding \citep{caponnetto2007optimal, muandet2012learning, szabo2015two, lin2017distributed}. The universality assumption on $k(\cdot, \cdot)$ ensures that each mean embedding uniquely characterizes a distribution \citep{gretton2007kernel, gretton2012kernel}. The continuous assumption on the embeddings $\mu_\omega$ is a mild regularity condition, which holds if the kernel $k(\cdot, \cdot)$ is universal and the domain $\Omega$ is compact \citep{sriperumbudur2010hilbert}. Meanwhile, the H\"older continuity of $K(\cdot, \cdot)$ in Assumption \ref{asu::bound_kernel} is a mild regularity condition. Such an assumption holds for a rich family of commonly used reproducing kernels, such as the linear kernel $K(\mu_x, \mu_y) = \langle\mu_x, \mu_y\rangle_{\cH(k)}$ and the RBF kernel $K(\mu_x, \mu_y) = \exp(-\|\mu_x - \mu_y\|^2_{\cH(k)}/\sigma^2)$. 

We highlight that the H\"older continuity of $K(\cdot, \cdot)$ allows for an approximation of mean embedding $\mu_p$ based on the empirical approximation $\hat p$ of the distribution $p \in \tilde\cM(\Omega)$ with finite observations, which further allows for an approximation of the action-value function with finite observations, thus tackles the challenge (ii). For an empirical approximation of state-action configuration $\omega_{a, \hat p_s}= \delta_a\times \hat p_s$, where $\hat p_s$ is the empirical distribution of the observed states $\{s_i\}_{i \in N}$, the mean embedding takes the following form,
\#\label{eq::FQI_omega_hat}
\mu_{\omega_{a,\hat p_s}}(\cdot) = \frac{1}{N}\sum^N_{i = 1}k\bigl(\cdot, (a, s_{i})\bigr),
\#
which is invariant to the permutation of states $\{s_i\}_{i\in[N]}$. Such an invariance is also exploited by the neural-network based approach named deep sets \citep{zaheer2017deep}. In what follows, we connect our mean embedding approach to the invariant deep reinforcement learning under the framework of overparametrized two-layer neural networks \citep{zhang2016understanding, jacot2018neural, neyshabur2018towards, arora2019fine}. 

\vskip4pt
\noindent{\bf Connection to Invariant Deep Reinforcement Learning.} In what follows, we assume that $(a, s_i)\in\RR^d$ and write $x_i = (a, s_i)$. We define the feature mappings $\{\phi_j(\cdot)\}_{j \in [m]}$ and $\{\Phi_\ell(\cdot)\}_{\ell \in [M]}$ as follows,
\#\label{eq::def_phi_nn}
&\phi_j(x) = \frac{1}{\sqrt{m}}\cdot b_j\cdot\ind\{w_j^\top x > 0\}\cdot x, \qquad \Phi_r(q) = \frac{1}{\sqrt{\ell}}\cdot b'_r\cdot\ind\{{W'_r}^\top q > 0\}\cdot q,
\#
where $b_j, b'_r\sim \text{Unif}\{-1, 1\}$,  $w_j \sim N(0, I_{d\ell}/(d\ell)) $, and $ W'_r\sim N(0, I_\ell/\ell)$. Correspondingly, we define the kernels $k_{m}(x, y) = \sum^m_{j = 1}\phi_j(x)^\top \phi_j(y)$ and  $K_\ell(p, q) = \sum^\ell_{r = 1}\Phi_r(p)^\top \Phi_r(q)$.

Note that the mean embedding of a point mass $\delta_{x_i}$ by the kernel $k_m$ is an array $\mu_i = [\phi_1(x_i)^\top, \ldots, \phi_m(x_i)^\top]^\top \in \RR^{md}$. For a mean embedding $\mu \in \RR^{md}$, we consider the parametrization of action-value functions $Q(\mu) = \tilde Q(D^\top \mu)$, where $\tilde Q \in \cH(K_\ell)$ and $D = [D_1, \ldots, D_m] \in \RR^{md\times \ell}$ with $D_j \in \RR^{d\times \ell}$ $(j\in[m])$. Let $\mu_p$ be the mean embedding of the empirical measure supported on $\{x_i\}_{i\in[N]}$. It then holds for some $\{\alpha_r\}_{r\in[\ell]} \subseteq \RR^d$ that
\#\label{eq::f}
Q(\mu_p) &= \sum^\ell_{r = 1} \alpha_r^\top \Phi_r( D^\top \mu_p ) = \frac{1}{\sqrt{\ell}}\sum^\ell_{r = 1} b'_r\cdot \ind\{{W'_r}^\top \rho > 0\}\cdot\alpha_r^\top (D^\top \mu_p)= f(D^\top \mu_p).
\#
Note that if $\alpha_r$ is sufficiently close to $W'_r$, then $Q(\mu_p)$ is close to $\tilde f(D^\top \mu_p)$, where
\#\label{eq::tilde_f}
\tilde f(\rho) = \frac{1}{\sqrt{\ell}}\sum^\ell_{r = 1} b'_r\cdot \ind\{{\alpha_r}^\top \rho > 0\}\cdot\alpha_r^\top (\rho), \quad \forall \rho \in \RR^\ell.
\#
Here $\tilde f(\cdot)$ is a two-layer neural network with parameters $b'_r$, $\alpha_r$ $(r\in[\ell])$ and ReLU activation function. Meanwhile, it holds that
\#\label{eq::psi}
D^\top \mu_p &= \frac{1}{N}\sum^N_{i = 1}\frac{1}{\sqrt{m}}\sum^m_{j = 1} b_j\cdot\ind\{{w_j}^\top x_i > 0\}\cdot D_j^\top x_i =\frac{1}{N}\sum^N_{i = 1} \psi(x_i).
\#
Similarly, if $D_j$ is close to $w_j$, then $\psi(x_i)$ is close to $\tilde \psi(x_i)$ defined as follows,
\#\label{eq::tilde_psi}
\tilde \psi(x_i) = \frac{1}{\sqrt{m}} \sum^m_{j = 1}b_j\cdot\ind\{{D_j}^\top x_i > 0\}\cdot D_j^\top x_i, \quad \forall x_i \in \RR^d.
\#
Here $\tilde \phi(x_i)$ is a two-layer neural network with input $x_i$, parameters $b'_j$ $(j \in [m])$ and $D$, and ReLU activation function. Note that for the functions $f$ and $\psi$ defined in \eqref{eq::f} and \eqref{eq::psi} to approximately take the form of neural networks, we requires the parameters $\alpha$ and $D$ to be sufficiently close to $W$ and $w$, respectively. Such a requirement is formally characterized by the study of overparametrized neural networks. More specifically, if the widths $m$ and $\ell$ of neural networks are sufficiently large (which depends on the deviation of parameters $\alpha$ and $D$ from their respective initializations $W$ and $w$), then the functions $f$ and $\psi$ well approximates the neural networks $\tilde f$ and $\tilde \psi$ defined in \eqref{eq::tilde_f} and \eqref{eq::tilde_psi}, respectively.

In conclusion, under the mean embedding with the feature mappings defined in \eqref{eq::def_phi_nn}, the parameterization of action-value function takes the form of $Q(\mu_p) = f(1/N\cdot\sum^N_{i = 1}\psi(x_i))$, where $f$ and $\psi$ are approximations of two-layer neural networks $\tilde f$ and $\tilde \psi$, respectively. Hence, the action-value function $Q(\mu_p)$ approximately takes the form of deep sets \citep{zaheer2017deep} with $\{x_i\}_{i \in[N]}$ as the set input.

\subsection{Mean-Field Fitted $Q$-Iteration}
In what follows, we establish a value-based algorithm that solves mean-field MARL problem in \S\ref{sec::n_player_sym} based on fitted $Q$-iteration \citep{ernst2005tree}. More specifically, we propose an algorithm that learns the optimal action-value function $Q^*$ by the sample $\{\delta_{a_i}\times p_{i, s}\}_{i\in[n]}$ that follows a sampling distribution $\nu$ over the space of state-action configurations $\tilde\cM(\Omega)$. For each state-action configuration $\delta_{a_i}\times p_{i, s}$, the mean-field state $p_{i, s}\in\cM(\cS)$ is available to us through the observed states $\{s_{i, j}\}_{j\in [N]}$, which are sampled independently from the mean-field state $p_{i, s}$. We further observe the immediate reward $r_i$ and the states $\{s'_{i, j}\}_{j\in [N]}$ that are independently sampled from the mean-field state $p_{i, s'}$.~Here $p_{i, s'}\sim P(\cdot \given a_i, p_{i, s})$ is the mean-field state after transition from the state-action configuration $(a_i, p_{i, s})$. Given the batch of data $\{(\{s_{i, j}\}_{j\in [N]}, a_i, r_i, \{s'_{i, j}\}_{j\in [N]})\}_{i\in[n]}$, the mean-field fitted $Q$-iteration (MF-FQI) sequentially computes
\#\label{eq::FQI_y_i}
\hat y_{i, k} = r_i + \gamma\cdot \max_{a\in\cA} \hat Q^\lambda_k(\mu_{\omega_{a, \hat p_{i, s'}}})
\#
at the $k$-th iteration. Here $\mu_{\omega_{a, \hat p_{i, s'}}}$ is the mean embedding of the distribution $\omega_{a, \hat p_{i, s'}} = \delta_{a}\times \hat p_{i, s'}$, $\hat p_{i, s'}$ is the empirical distribution supported on the set $\{s'_{i, j}\}_{j\in [N]}$, and $Q^\lambda_k$ is the approximation of the optimal action-value function at the $k$-th iteration of MF-FQI. Upon computing $\{\hat y_{i,k}\}_{i\in[n]}$ according to \eqref{eq::FQI_y_i}, MF-FQI then updates the approximation of the optimal action-value function in the RKHS $\cH(K)$ by solving the following optimization problem,
\#\label{eq::FQI_KRR}
 Q^\lambda_{k + 1} = \argmin_{f \in \cH(K)} \frac{1}{n}\sum^n_{i = 1}\bigl(f(\mu_{\hat \omega_i}) - \hat y_{i, k}\bigr)^2 + \lambda\cdot\|f\|^2_{\cH(K)},\quad \hat Q^\lambda_{k+1} = \min\{Q^\lambda_{k + 1}, Q_{\max}\},
\#
where $\hat\omega_i = \delta_a\times \hat p_{i, s}$ and $\hat p_{i, s}$ is the empirical distribution supported on the set $\{s_{i, j}\}_{j\in[N]}$. We summarize MF-FQI defined by \eqref{eq::FQI_y_i} and \eqref{eq::FQI_KRR} in Algorithm \ref{algo::KRR_FQI}. We highlight that MF-FQI has a linear computational complexity in terms of the number of observed agents $N$. Therefore, MF-FQI is computationally tractable even for a large number of observed agents $N$.
\begin{algorithm}[h]
\caption{Mean-Field Fitted $Q$-Iteration (MF-FQI)}\label{algo::KRR_FQI}
\begin{algorithmic}[1]
\STATE{\textbf{Input:}} Batch of data $\{(\{s_{i, j}\}_{j\in [N]}, a_i, r_i, \{s'_{i, j}\}_{j\in [N]})\}_{i\in[n]}$, reproducing kernels $k(\cdot, \cdot)$ and $K(\cdot, \cdot)$, number of iterations $\kappa$, parameter $\lambda$, initial action-value function $\hat Q^\lambda_0$.
\STATE{For all $i\in[n]$, compute mean embeddings $\mu_{\hat\omega_i}$ and $\mu_{a, \hat p_{i, s'}}$ for all $a\in\cA$ as follows,
\$
&\mu_{\hat\omega_i}(\cdot) = \frac{1}{N}\sum^N_{j = 1}k\bigl(\cdot, (a_i, s_{i, j})\bigr), \\
&\mu_{a, \hat p_{i, s'}}(\cdot) = \frac{1}{N}\sum^N_{j = 1}k\bigl(\cdot, (a, s'_{i, j})\bigr), \quad \forall a \in \cA.
\$
}
\FOR{$k = 0, 1, \ldots, \kappa-1$}
\STATE{Compute $\hat y_{i, k} = r_i + \gamma \cdot \max_{a\in\cA} \hat Q^\lambda_k(\mu_{a, \hat p_{i, s'}})$ for all $i\in[n]$.}
\STATE{Update the action-value function $\hat Q^\lambda_{k+1}$ by
\$
 Q^\lambda_{k + 1} = \argmin_{f \in \cH(K)} \frac{1}{n}\sum^n_{i = 1}\bigl(f(\mu_{\hat \omega_i}) - \hat y_{i, k}\bigr)^2 + \lambda\cdot\|f\|^2_{\cH(K)}, \quad \hat Q^\lambda_{k+1} = \min\{Q^\lambda_{k + 1}, Q_{\max}\}.
 \$
 }
\ENDFOR
\STATE{\textbf{Output:}} An estimator $\hat Q^\lambda_\kappa$ of $Q^*$, a greedy policy $\pi_\kappa$ with respect to $\hat Q^\lambda_\kappa$.
\end{algorithmic}
\end{algorithm}

\section{Main Results}\label{sec::theory}
In this section, we establish the theoretical guarantee of MF-FQI defined in Algorithm \ref{algo::KRR_FQI}. In the sequel, we denote by $\hat Q^\lambda_\kappa$ and $\pi_\kappa$ the outputs of MF-FQI. Our goal is to establish an upper bound for $\|Q^* - Q^{\pi_\kappa}\|_{1, \mu}$, where $\mu$ is the measurement distribution over $\tilde\cM(\Omega)$. 

We first introduce the definition of concentration coefficients. For a policy $\pi_1$, we define $E^{\pi_1} \nu$ as the distribution of $\Lambda_1 = \delta_{A_1}\times P_{1, s}$, where $P_{1, s} \sim P(\cdot\given \nu)$ and $A_1\sim\pi_1(\cdot\given P_{1, s})$. Similarly, for policies $\{\pi_i\}_{i\in[\ell]}$, we recursively define $E^{\pi_\ell} \circ E^{\pi_{\ell - 1}}\circ\ldots \circ E^{\pi_1} \nu$ as the distribution of $\Lambda_\ell = \delta_{A_\ell}\times P_{\ell, s}$, where $P_{\ell, s} \sim P(\cdot\given  E^{\pi_{\ell - 1}}\circ\ldots \circ E^{\pi_1} \nu)$ and $A_{\ell} \sim \pi_{\ell}(\cdot\given P_{\ell, s})$. In what follows, we define the concentration coefficients that measures the difference between the sampling distribution $\nu$ and the measurement distribution $\mu$ on $\tilde\cM(\Omega)$.
\begin{assumption}(Concentration Coefficients)
\label{asu::concen}
Let $\nu$ be the sampling distribution on $\tilde \cM(\Omega)$. Let $\mu$ be the measurement distribution on $\tilde\cM(\Omega)$. We assume that for any policies $\{\pi_i\}_{i\in[\ell]}$, the distribution $E^{\pi_\ell} \circ E^{\pi_{\ell - 1}}\circ\ldots \circ E^{\pi_1} \nu$ is absolutely continuous with respect to $\mu$. We define the $\ell$-th concentration coefficients between $\nu$ and $\mu$ as follows,
\#\label{eq::concen_coeff1}
&\phi(\ell; \mu, \nu) = \sup_{\pi_1, \ldots, \pi_\ell}\Biggl(\EE_{\mu}\biggl[\biggl(\frac{\ud E^{\pi_\ell} \circ E^{\pi_{\ell - 1}}\circ \ldots \circ E^{\pi_1} \nu }{\ud \mu}\biggr)^2\biggr]\Biggr)^{1/2}.
\#
We assume that $\phi(\ell; \mu, \nu) < +\infty$ for any $\ell \in \NN$. We further assume that there exist a positive absolute constant $\Phi(\mu, \nu)$ such that
\#\label{eq::concen_coeff1}
\sum^\infty_{\ell = 1}\gamma^{\ell - 1}\cdot\ell\cdot \phi(\ell; \mu, \nu) \leq \Phi(\mu, \nu)/(1 - \gamma)^2.
\#
\end{assumption}
Assumption \ref{eq::concen_coeff1} is a standard assumption in the theoretical analysis of reinforcement learning \citep{szepesvari2005finite, munos2008finite, antos2008learning,  lazaric2010analysis, farahmand2010error, farahmand2016regularized, scherrer2013performance, scherrer2015approximate,yang2019theoretical, chen2019information}. 
Under Assumption \ref{asu::concen}, the following theorem upper bounds the error $\|Q^* - Q^{\pi_{\kappa}}\|_{1, \mu}$ of MF-FQI.
\begin{proposition}[Error Propagation]
\label{thm::error_prop}
Let $\{\hat Q^\lambda_i\}_{i\in [\kappa]}$ be the output of Algorithm \ref{algo::KRR_FQI}. Let $\pi_\kappa$ be the greedy policy corresponding to $\hat Q^\lambda_\kappa$. Under Assumption \ref{asu::concen}, it holds that
\#\label{eq::thm_err_prop}
\|Q^* - Q^{\pi_\kappa}\|_{1, \mu} &\leq \underbrace{\frac{2\gamma\cdot\Phi(\mu, \nu)}{(1 -\gamma)^2}\cdot\max_{i \in [\kappa]}\|\hat Q^\lambda_{i} - T \hat Q^\lambda_{i-1}\|_\nu}_{\textstyle{\rm (a)}}+ \underbrace{\frac{4\gamma^{\kappa+1}\cdot Q_{\max}}{1 - \gamma}}_{\textstyle{\rm (b)}}.
\#
\end{proposition}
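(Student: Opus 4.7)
The plan is to follow the standard error-propagation template for approximate value iteration (in the style of Munos--Szepesv\'ari), adapted to the mean-field MDP on $\tilde\cM(\Omega)$. The argument has two layers: first, pointwise-relate $Q^\ast-Q^{\pi_\kappa}$ to the final iteration error $Q^\ast-\hat Q^\lambda_\kappa$; second, unroll $Q^\ast-\hat Q^\lambda_\kappa$ into a telescoping sum of one-step Bellman residuals $\varepsilon_i=\hat Q^\lambda_i-T\hat Q^\lambda_{i-1}$. The concentration coefficients in Assumption \ref{asu::concen} are what allow the final $L^1(\mu)$ norm on the left-hand side to be controlled by the $L^2(\nu)$ norms on the right-hand side.

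First I would introduce, for any policy $\pi$, the operator $P^\pi$ that acts on functions on $\tilde\cM(\Omega)$ via $(P^\pi f)(\omega_{a,p_s}) = \E[f(\delta_{A'}\times P_{s'})]$ with $P_{s'}\sim P(\cdot\,|\,\omega_{a,p_s})$ and $A'\sim\pi(\cdot\,|\,P_{s'})$, so that $T^\pi f = r + \gamma P^\pi f$ and $T f=\max_a T^\pi f$. Using that $\pi_\kappa$ is greedy with respect to $\hat Q^\lambda_\kappa$ so that $T^{\pi_\kappa}\hat Q^\lambda_\kappa \ge T^{\pi^\ast}\hat Q^\lambda_\kappa$, a short computation yields the one-step bound
\#\label{eq::plan_step1}
0 \le Q^\ast - Q^{\pi_\kappa} \le \gamma\bigl(I-\gamma P^{\pi_\kappa}\bigr)^{-1}\bigl(P^{\pi^\ast} - P^{\pi_\kappa}\bigr)\bigl(Q^\ast - \hat Q^\lambda_\kappa\bigr).
\#
Next, for any two functions $Q_1,Q_2$ with greedy policies $\pi_1,\pi_2$, the two-sided bound $\gamma P^{\pi_2}(Q_1-Q_2)\le TQ_1-TQ_2\le\gamma P^{\pi_1}(Q_1-Q_2)$ holds. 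Applied iteratively to $Q^\ast - \hat Q^\lambda_i = T Q^\ast - T\hat Q^\lambda_{i-1} - \varepsilon_i$, this yields
\#\label{eq::plan_step2}
\bigl|Q^\ast-\hat Q^\lambda_\kappa\bigr| \le \sum_{i=1}^{\kappa}\gamma^{\kappa-i}\,\bigl(P^{\pi_{\kappa,i,1}}\cdots P^{\pi_{\kappa,i,\kappa-i}}\bigr)\,|\varepsilon_i| + \gamma^\kappa\,\bigl(P^{\pi_{\kappa,0,1}}\cdots P^{\pi_{\kappa,0,\kappa}}\bigr)\,\bigl|Q^\ast-\hat Q^\lambda_0\bigr|,
\#
for appropriate sequences of policies. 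Substituting \eqref{eq::plan_step2} into \eqref{eq::plan_step1} and expanding $(I-\gamma P^{\pi_\kappa})^{-1}=\sum_{j\ge 0}\gamma^j (P^{\pi_\kappa})^j$ produces an upper bound that is a convex combination of operator products $P^{\pi'_\ell}\cdots P^{\pi'_1}$ applied to $|\varepsilon_i|$ (plus a tail of initial-error terms).

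To pass to the $L^1(\mu)$ norm, I would integrate against $\mu$, change variables via the Radon--Nikodym derivative $\mathrm{d}(E^{\pi'_\ell}\cdots E^{\pi'_1}\nu)/\mathrm{d}\mu$, and apply Cauchy--Schwarz. By definition \eqref{eq::concen_coeff1}, each such product of kernels pushes forward $\nu$ to a distribution whose density with respect to $\mu$ has $L^2$-norm bounded by $\phi(\ell;\mu,\nu)$, so each term contributes at most $\phi(\ell;\mu,\nu)\cdot\|\varepsilon_i\|_\nu$. Collecting the geometric weights and invoking the summability hypothesis $\sum_{\ell\ge 1}\gamma^{\ell-1}\ell\,\phi(\ell;\mu,\nu)\le \Phi(\mu,\nu)/(1-\gamma)^2$ yields the $\Phi(\mu,\nu)/(1-\gamma)^2$ prefactor on the residual term. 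The tail term in \eqref{eq::plan_step2} is bounded crudely by $2Q_{\max}$ because both $Q^\ast$ and $\hat Q^\lambda_0$ are bounded by $Q_{\max}$, and the operator product contracts $L^\infty$; combined with the $\gamma^{\kappa+1}$ prefactor inherited from \eqref{eq::plan_step1} and a factor of $2/(1-\gamma)$ coming from $(I-\gamma P^{\pi_\kappa})^{-1}(P^{\pi^\ast}-P^{\pi_\kappa})$, this produces the term $4\gamma^{\kappa+1}Q_{\max}/(1-\gamma)$.

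The hard part is not any single inequality but the bookkeeping: the Bellman optimality operator is nonlinear through the max, so the ``linearization'' into operator products $P^{\pi'_\ell}\cdots P^{\pi'_1}$ must introduce new auxiliary policies at every recursion step, and one must verify carefully that the resulting operator products remain valid Markov kernels so that Assumption \ref{asu::concen} is applicable. A secondary subtlety is that the concentration coefficients in \eqref{eq::concen_coeff1} are defined with a supremum over \emph{all} policies, which is exactly what is needed to absorb the arbitrariness of these auxiliary policies; checking that the weight of a term of depth $\ell$ in the expanded bound is at most $\gamma^{\ell-1}\ell$ (rather than a larger combinatorial factor) is where the weighted-sum hypothesis enters, and where the factor $(1-\gamma)^{-2}$ becomes tight.
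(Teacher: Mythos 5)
Your proposal is correct and follows essentially the same route as the paper's proof: the one-step greedy bound $Q^*-Q^{\pi_\kappa}\le\gamma(I-\gamma E^{\pi_\kappa})^{-1}(E^{\pi^*}-E^{\pi_\kappa})(Q^*-\hat Q^\lambda_\kappa)$, the two-sided unrolling of $Q^*-\hat Q^\lambda_i$ into operator products applied to the residuals, the Neumann expansion, and the Cauchy--Schwarz step against the Radon--Nikodym derivative that converts the $L^1(\mu)$ bound into $\phi(\ell;\mu,\nu)\cdot\|\epsilon_i\|_\nu$ terms summed with the $\gamma^{\ell-1}\ell$ weights. Your bookkeeping of the constants, including the $4\gamma^{\kappa+1}Q_{\max}/(1-\gamma)$ tail, matches the paper's.
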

\begin{proof}
See \S\ref{pf::err_prop} for a detailed proof.
\end{proof}
Following from Theorem \ref{thm::error_prop}, the error of MF-FQI is upper bounded by the sum of the two terms on the right-hand side of \eqref{eq::thm_err_prop}. Here term (b) characterizes the algorithmic error that hinges on the number of iterations $\kappa$. Meanwhile, term (a) characterizes the one-step approximation error that hinges on the approximation $\hat Q^\lambda_{i}$ of $T \hat Q^\lambda_{i-1}$. In the sequel, we upper bound the one-step approximation error characterized by term (a). To this end, we first impose the following regularity condition on the Bellman optimality operator $T$ and the RKHS $\cH(K)$.
\begin{assumption}[Regularity Condition of $T$ and $\cH(K)$]
\label{asu::kernel_approx}
We define the integral operator $\cC$ as follows,
\#\label{eq::int_oper_C}
\cC f(x) = \int_{\tilde\cM(\Omega)} K(x, \mu_\omega) f(\mu_\omega) d\nu(\omega).
\#
We assume that the eigenvalues $\{t_n\}_{n \in \NN}$ of $\cC$ is bounded such that $\alpha \leq n^{b} t_n \leq \beta$ for all $n\in\NN$, where $\alpha$, $\beta$, and $b > 1$ are positive absolute constants. We further assume that for any output $Q^\lambda\in\cH(K)$ of the regression problem defined in \eqref{eq::FQI_KRR}, it holds for some $g \in \cH(K)$ that
\#\label{eq::reg_QHT}
Q_{\cH, T} = \cC^{(c - 1)/2} g,\quad \|g\|_{\cH(K)} \leq R.
\#
Here $R>0$ and $c\in[1, 2]$ are absolute constants, and $Q_{\cH, T}$ is defined as follows,
\#\label{eq::def_QHT}
Q_{\cH, T} = \Pi_{\cH(K)} (T \hat Q^\lambda) =\argmin_{f\in\cH(K)} \|f - T \hat Q^\lambda\|_{\nu}, \quad \hat Q^{\lambda} = \min\{Q^\lambda, Q_{\max}\},
\#
where we denote by $\Pi_{\cH(K)}$ the projection onto $\cH(K)$ with respect to the norm $\|\cdot\|_{\nu}$.
\end{assumption}
Assumption \ref{asu::kernel_approx} is a mild regularity assumption on the RKHS $\cH(K)$ and the Bellman optimality operator $T$. Similar assumptions arises in the analysis of kernel ridge regression \citep{caponnetto2007optimal, szabo2015two, lin2017distributed}. The parameters $b$ and $c$ in Assumption \ref{asu::kernel_approx} define a prior space $\cP(b, c)$ in the context of kernel ridge regression \citep{caponnetto2007optimal}. Intuitively, the parameter $c$ controls the smoothness of $Q_{\cH, T}$ defined in \eqref{eq::def_QHT}, and the parameter $b$ controls the size of $\cH(K)$. Under Assumption \ref{asu::kernel_approx}, the following theorem characterizes the one-step approximation error of MF-FQI defined in Algorithm \ref{algo::KRR_FQI}.
\begin{theorem}[One-step Approximation Error]
\label{thm::one_step}
Let $\eta$, $\tau$ be two constants such that $ 0<\eta+\tau < 1$. Let $C(\eta) = 32\log^2(6/\eta)$. Under Assumptions \ref{asu::bound_kernel} and \ref{asu::kernel_approx}, for
\#\label{eq::asu_Nnl}
&N \geq 2\varrho\cdot(1 + \sqrt{\log (|\cA|\cdot n/2\tau )})^2\cdot(64L^2\varsigma^2/\lambda^2)^{1/h},\quad n \geq \frac{2C(\eta)\varsigma\beta b}{(b - 1)\lambda^{1 + 1/b}}, \quad \lambda \leq \|\cC\|_{\cH(K)},
\#
it holds with probability at least $1 - \eta - \tau$ that
\#\label{eq::one_step}
\|\hat Q^\lambda_{k} - T \hat Q^\lambda_{k-1}\|^2_{\nu} \leq \cG_1 + \cG_2 + \psi^2_T, \quad \forall k \in[\kappa],
\#
where
\#\label{eq::error_part}
\cG_1 &= \frac{8L^2Q^2_{\max}\bigl(1 + \sqrt{\log (|\cA|\cdot n/2\tau ) }\bigr)^{2h}\cdot (2\varrho)^h}{\lambda \cdot N^h}\cdot\Bigl(1 + \frac{5\varsigma^2}{\lambda^2}\Bigr),\notag\\
\cG_2 &= C(\eta) \cdot \Biggl(R \lambda^c +\frac{\varsigma^2 R }{\lambda^{2 - c}n^2} + \frac{\varsigma R \lambda^{c - 1}}{4n}+ \frac{\varsigma M^2}{\lambda n^2} + \frac{\Sigma^2 \beta b}{(b - 1)n\lambda^{1/b}}\Biggr),
\#
and the term $\psi_T$ is defined as follows,
\#\label{eq::def_distt}
\psi_T = \sup_{k \in [\kappa]}\|T\hat Q^\lambda_k - \Pi_{\cH(K)}(TQ^\lambda_k)\|_{\nu}.
\#
Here $M$ and $\Sigma$ are positive absolute constants, the parameters $\varsigma$, $\varrho$, $h$ are defined in Assumption \ref{asu::bound_kernel}, and the parameters $\cC$, $b$, $c$ are defined in Assumption \ref{asu::kernel_approx}, and $\Pi_{\cH(K)}$ the projection onto $\cH(K)$ with respect to the norm $\|\cdot\|_{\nu}$.
\end{theorem}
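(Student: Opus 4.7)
The plan is to decompose $\hat Q^\lambda_k - T\hat Q^\lambda_{k-1}$ into three sources of error: the misspecification of the Bellman image relative to $\cH(K)$ (yielding $\psi_T$), the standard KRR statistical error under exact mean embeddings (yielding $\cG_2$), and the perturbation induced by using empirical mean embeddings based on $N$ observed agents (yielding $\cG_1$). Writing $Q_{\cH, T} = \Pi_{\cH(K)}(T\hat Q^\lambda_{k-1})$ and noting that clipping to $[-Q_{\max}, Q_{\max}]$ is non-expansive against the bounded target $T\hat Q^\lambda_{k-1}$, the triangle inequality gives
$$
\|\hat Q^\lambda_k - T\hat Q^\lambda_{k-1}\|_\nu \leq \|Q^\lambda_k - Q_{\cH, T}\|_\nu + \|Q_{\cH, T} - T\hat Q^\lambda_{k-1}\|_\nu,
$$
where the second term is bounded by $\psi_T$ via the definition \eqref{eq::def_distt}. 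The task reduces to bounding $\|Q^\lambda_k - Q_{\cH, T}\|_\nu$.

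To separate the two stochastic effects, I would introduce an oracle estimator $\tilde Q^\lambda_k$ defined by the same KRR as \eqref{eq::FQI_KRR} but with the true mean embeddings $\mu_{\omega_i}$ replacing $\mu_{\hat\omega_i}$ and the oracle targets $\bar y_{i,k} = r_i + \gamma \max_{a\in\cA} \hat Q^\lambda_{k-1}(\mu_{\omega_{a, p_{i, s'}}})$ replacing $\hat y_{i,k}$, and split $\|Q^\lambda_k - Q_{\cH, T}\|_\nu \leq \|Q^\lambda_k - \tilde Q^\lambda_k\|_\nu + \|\tilde Q^\lambda_k - Q_{\cH, T}\|_\nu$. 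The second summand is the excess risk of KRR on $n$ i.i.d.\ samples under the source condition \eqref{eq::reg_QHT} and the polynomial eigendecay $\alpha \leq n^b t_n \leq \beta$. Following a Caponnetto--De~Vito style argument, I would control the empirical covariance $\hat\cC_n = n^{-1}\sum_i K(\cdot,\mu_{\omega_i})\otimes K(\cdot,\mu_{\omega_i})$ and the noise score $\hat g - \cC Q_{\cH, T}$ via Pinelis--Bernstein concentration for Hilbert-valued random variables, bound the effective dimension $\tr((\cC+\lambda I)^{-1}\cC)$ using the eigendecay, and sum the resulting bias and variance pieces to obtain the five summands of $\cG_2$. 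The conditions $n\gtrsim\lambda^{-1-1/b}$ and $\lambda\leq\|\cC\|_{\cH(K)}$ are precisely what is needed to keep $(\hat\cC_n+\lambda I)^{-1}$ close to $(\cC+\lambda I)^{-1}$ in operator norm.

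For the first summand $\|Q^\lambda_k - \tilde Q^\lambda_k\|_\nu$, a Hoeffding-type inequality for sums of bounded Hilbert-valued random variables gives, uniformly over $i\in[n]$ and $a\in\cA$ with probability at least $1-\tau$,
$$
\|\mu_{\hat\omega_i} - \mu_{\omega_i}\|_{\cH(k)} + \max_{a\in\cA}\|\mu_{\omega_{a,\hat p_{i,s'}}} - \mu_{\omega_{a, p_{i,s'}}}\|_{\cH(k)} \lesssim \sqrt{\varrho/N}\bigl(1 + \sqrt{\log(|\cA|n/\tau)}\bigr),
$$
where the union bound over $\cA$ is required because the targets involve $\max_a$. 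The H\"older continuity of $K$ from Assumption~\ref{asu::bound_kernel} then yields $\|K(\cdot,\mu_{\hat\omega_i}) - K(\cdot,\mu_{\omega_i})\|_{\cH(K)}\lesssim L\cdot N^{-h/2}$ up to the same logarithmic factor, and analogously for the evaluations $\hat Q^\lambda_{k-1}(\mu_{\omega_{a,\hat p_{i,s'}}})$ inside $\hat y_{i,k}$. Applying the resolvent identity $A^{-1} - B^{-1} = A^{-1}(B - A)B^{-1}$ with $A,B$ the regularized empirical and oracle covariance operators transfers the $N^{-h/2}$ operator perturbation into a $\lambda^{-1}N^{-h/2}$ perturbation of the KRR solution; combining with $\|Q^\lambda_k\|_{\cH(K)}\lesssim Q_{\max}\varsigma/\lambda$ yields exactly the $\lambda^{-1}(1 + 5\varsigma^2/\lambda^2)N^{-h}$ shape of $\cG_1$. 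The condition on $N$ in \eqref{eq::asu_Nnl} ensures the operator perturbation is below $\lambda$, so the resolvent expansion remains valid.

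The main obstacle is that empirical mean embeddings enter in two coupled ways, once as the regressors $\mu_{\hat\omega_i}$ (perturbing $\hat\cC_n$) and once inside the targets $\hat y_{i,k}$ through $\hat Q^\lambda_{k-1}$ together with an inner $\max_{a\in\cA}$, and all estimates must hold uniformly over $k\in[\kappa]$. Threading these coupled perturbations through the resolvent expansion, propagating the action-space union bound, and simultaneously verifying the non-asymptotic thresholds on $N$, $n$, and $\lambda$ is what determines the precise exponents of $\lambda$, $N$, and the $\log(|\cA|n/\tau)$ factors that appear in $\cG_1$ and $\cG_2$.
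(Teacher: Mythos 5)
Your proposal matches the paper's proof essentially step for step: the same oracle estimator built from the exact mean embeddings and oracle targets, the same Caponnetto--De Vito treatment of the resulting KRR excess risk yielding $\cG_2$, and the same combination of Hilbert-valued concentration for the empirical embeddings, H\"older continuity of $K$, and a resolvent/operator-perturbation argument with a union bound over $[n]\times\cA$ yielding $\cG_1$. The one place you deviate is the very first step: instead of your triangle inequality, the paper uses the Pythagorean identity for the $\|\cdot\|_\nu$-projection, namely $\|Q^\lambda_k - T\hat Q^\lambda_{k-1}\|^2_\nu = \|Q_{\cH,T} - T\hat Q^\lambda_{k-1}\|^2_\nu + \|\sqrt{\cC}(Q^\lambda_k - Q_{\cH,T})\|^2_{\cH(K)}$, which is what produces the clean additive form $\cG_1 + \cG_2 + \psi^2_T$ on the squared error; squaring your triangle inequality would recover the same bound only up to a factor of $2$.
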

\begin{proof}
See \S\ref{pf::one_step} for a detailed proof.
\end{proof}
Following from Theorem \ref{thm::one_step}, the one-step approximation error is upper bounded by the sum of the three terms, $\cG_1$, $\cG_2$, and $\psi_T$, on the right-hand side of \eqref{eq::one_step}. Here the term $\cG_1$ characterizes the error by approximating the mean-field state via $N$ observed agents, the term $\cG_2$ characterizes the error by estimating $TQ$ with the batch of size $n$, and the term $\psi_T$ characterizes the error in of approximating $TQ$ by functions from the RKHS $\cH(K)$. If we further assume that $T Q \in \cH(K)$ for all $Q \in \cH(K)$ with $Q\leq Q_{\max}$, then term $\psi_T$ vanishes and $Q^*\in\cH(K)$ is the unique fixed point of $T$ in $\cH(K)$. Combining the error propagation in Proposition \ref{thm::error_prop} and the one-step approximation error in Theorem \ref{thm::one_step}, we obtain the following theorem that upper bounds the error of MF-FQI defined in Algorithm \ref{algo::KRR_FQI}.
\begin{theorem}[Theoretical Guarantee of MF-FQI]
\label{thm::err_fin}
Let $\pi_\kappa$ be the output of MF-FQI and $n = N^a$. Under the assumptions imposed by Proposition \ref{thm::error_prop} and \ref{thm::one_step}, for
\$
 \|\cC\|_{\cH(K)} \geq \max\bigl\{(a\cdot \log N / N)^{h/(c + 3)}, 1/N^{ab/(bc + 1)}\bigr\}, 
\$
it holds with probability at least $1  - \eta - \tau$ that
\#\label{eq::final_error}
\|Q^* - Q^{\pi_\kappa}\|_{1, \mu} &\leq\frac{2\gamma\cdot\Phi(\mu, \nu)}{(1 -\gamma)^2}\cdot \biggl(\underbrace{C\cdot\Xi}_{\textstyle{\rm (i)}} + \underbrace{\psi_T}_{\textstyle{\rm (ii)}}\biggr)+ \underbrace{\frac{4\gamma^{\kappa+1}\cdot Q_{\max}}{1 - \gamma}}_{\textstyle{\rm (iii)}},
\#
where $C$ is a positive absolute constant, $\psi_T$ is defined in \eqref{eq::def_distt} of Theorem \ref{thm::one_step}, and
\$
\Xi = \max\biggl\{\Bigl(\frac{\log (|\cA|\cdot N/\tau)}{ N}\Bigr)^{\frac{h c}{2(c + 3)}},  1/N^{\frac{abc}{2(bc + 1)}}\biggr\}.
\$
Here the integral operator $\cC$ is defined in \eqref{eq::int_oper_C} of Assumption \ref{asu::kernel_approx}, the parameter $h$ is defined in Assumption \ref{asu::bound_kernel}, and parameters $b$, $c$ are defined in Assumption \ref{asu::kernel_approx}.
\end{theorem}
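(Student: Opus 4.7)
The plan is to combine the error propagation estimate of Proposition \ref{thm::error_prop} with the one-step bound of Theorem \ref{thm::one_step} and then optimize the regularization parameter $\lambda$ subject to the constraint $\lambda \leq \|\cC\|_{\cH(K)}$. Applying Proposition \ref{thm::error_prop} immediately produces term (iii) of \eqref{eq::final_error} and reduces the problem to bounding $\max_{i\in[\kappa]}\|\hat Q^\lambda_i - T\hat Q^\lambda_{i-1}\|_\nu$. By Theorem \ref{thm::one_step}, this maximum is at most $\sqrt{\cG_1 + \cG_2} + \psi_T$, with the $\psi_T$ term contributing (ii). Thus it remains to exhibit a $\lambda$ satisfying the hypotheses of Theorem \ref{thm::one_step} and such that $\sqrt{\cG_1 + \cG_2} \lesssim \Xi$.

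Keeping only the dominant $\lambda$-dependence, the target quantity scales as
\$
\cG_1 \;\sim\; \frac{\bigl(\log(|\cA| N/\tau)\bigr)^h}{\lambda^{3}\, N^h}, \qquad \cG_2 \;\sim\; R\,\lambda^c \;+\; \frac{\Sigma^2\beta b}{(b-1)\,n\,\lambda^{1/b}},
\$
the remaining $\cG_2$ terms (those of order $1/n^{2}$) being easily dominated. This suggests two natural candidate choices: the value $\lambda_1 = (\log(|\cA|N/\tau)/N)^{h/(c+3)}$ that equalizes $\cG_1$ with $\lambda^c$ via $\lambda^{c+3}\sim(\log N)^h/N^h$, and the value $\lambda_2 = N^{-ab/(bc+1)}$ that balances the bias and variance inside $\cG_2$ via $\lambda^{c+1/b}\sim 1/n$ (using $n=N^a$). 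I would then set $\lambda = \max\{\lambda_1,\lambda_2\}$. The assumption
\$
\|\cC\|_{\cH(K)} \;\geq\; \max\bigl\{(a\log N/N)^{h/(c+3)},\, 1/N^{ab/(bc+1)}\bigr\}
\$
is precisely what ensures $\lambda \leq \|\cC\|_{\cH(K)}$, as required by Theorem \ref{thm::one_step}.

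With this choice, I would verify the four bookkeeping inequalities: (a) $\cG_1 \lesssim \lambda_1^{c}$, which follows because $\lambda \geq \lambda_1$ forces $(\log N)^h/(\lambda^3 N^h)\leq (\log N)^h/(\lambda_1^{3}N^h)=\lambda_1^{c}$; (b) $R\lambda^c \lesssim \lambda_1^c + \lambda_2^c$, which is immediate; (c) $\Sigma^2/(n\lambda^{1/b})\lesssim \lambda_2^{c}$, which follows from $\lambda\geq\lambda_2$ together with the definition of $\lambda_2$; (d) the $1/n^2$ terms in $\cG_2$ are of strictly higher order in $N$. Summing, $\cG_1+\cG_2 \lesssim \lambda_1^c + \lambda_2^c \asymp \Xi^2$, and taking square roots and multiplying by $2\gamma\Phi(\mu,\nu)/(1-\gamma)^2$ delivers term (i) of \eqref{eq::final_error}.

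The main obstacle is the careful bookkeeping of the exponents: one has to check simultaneously in both regimes ($\lambda=\lambda_1$ dominant vs.\ $\lambda=\lambda_2$ dominant) that the sample-size condition $N \geq 2\varrho(1+\sqrt{\log(|\cA|n/2\tau)})^2 (64L^2\varsigma^2/\lambda^2)^{1/h}$ and the condition $n\geq 2C(\eta)\varsigma\beta b/((b-1)\lambda^{1+1/b})$ required by \eqref{eq::asu_Nnl} are compatible with the choices $\lambda_1,\lambda_2$; these reduce to polynomial inequalities in $N$ that can be absorbed into the leading constants since $\lambda_1\gtrsim N^{-h/(c+3)}$ and $\lambda_2\gtrsim N^{-ab/(bc+1)}$. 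Beyond this verification, the rest of the proof is direct substitution into \eqref{eq::thm_err_prop} and \eqref{eq::one_step}.
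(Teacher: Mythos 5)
Your proposal is correct and follows essentially the same route as the paper: combine Proposition \ref{thm::error_prop} with Theorem \ref{thm::one_step}, then tune $\lambda$ to the two candidate values $(a\log N/N)^{h/(c+3)}$ and $N^{-ab/(bc+1)}$ that respectively balance $\cG_1$ against $R\lambda^c$ and $R\lambda^c$ against $1/(n\lambda^{1/b})$. The only cosmetic difference is that you set $\lambda=\max\{\lambda_1,\lambda_2\}$ in one unified argument, whereas the paper splits into the two regimes $a\gtrless h(1+1/b)/(c+3)$ and uses the corresponding $\lambda_i$ in each; the exponent bookkeeping and the verification of \eqref{eq::asu_Nnl} are otherwise identical.
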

\begin{proof}
See \S\ref{pf::err_fin} for a detailed proof.
\end{proof}
By Theorem \ref{thm::err_fin}, the approximation error of the action-value function attained by MF-FQI  is characterized by the three terms on the right-hand side of \eqref{eq::final_error}. 
Here term (i) characterizes the statistical error, which is small for a sufficiently large number of observed agents $N$ and batch size $n = N^a$.
Term (ii) is the bias $\psi_T$ defined in in \eqref{eq::def_distt} of Theorem \ref{thm::one_step}, which vanishes if the Bellman optimality operator $T$ is closed in the RKHS $\cH(K)$. 
Term (iii) characterizes the algorithmic error, which is small for a sufficiently large number of iterations $\kappa$. 

In the sequel, we assume that $T$ is closed in $\cH(K)$ and thus $\psi_T = 0$ for simplicity. Note that the algorithmic error characterized by term (ii) has a linear rate of convergence, which is negligible comparing with the statistical error characterized by term (i) if the iteration number is sufficiently large. More specifically, if it holds for some positive absolute constant $C$ that
\#\label{eq::itr_domin}
\kappa \geq C\cdot \max\biggl\{& \frac{h c}{2(c + 3)\cdot  \log (1/\gamma)} \cdot \log\Bigl(\frac{N}{\log(|\cA|\cdot N)}\Bigr),~\frac{abc}{2(bc+1)\cdot  \log (1/\gamma)}\cdot \log N\biggr\},
\#
then the dominating term on the right-hand side of \eqref{eq::final_error} in Theorem \ref{thm::err_fin} is term (i) that characterizes the statistical error.
\vskip4pt
\noindent{\bf Phase Transition. } Note that MF-FQI involves two stage of sampling, where the first stage samples $n$ mean-field state from the sampling distribution $\nu$, and the second stage samples $N$ states from each mean-field state. In what follows, we discuss the connection between the performance of MF-FQI and the sample complexity of the two-stage sampling involved. More specifically, we discuss the phase transition in the statistical error of MF-FQI when $a = \log n/\log N$ transits from zero to infinity. We categorize the phase transition into the following regimes in terms of $a$.
\begin{enumerate}
\item For $a > h\cdot(c + 1/b)/(c +3)$, the rate of convergence of the statistical error takes the form of $\cO((\log(|\cA|\cdot N)/N)^{hc/(2c + 6)})$. In this regime, increasing the number of observations $N$ for each mean-field state improves the performance of MF-FQI, whereas increasing the batch size $n$  of mean-field state cannot improve the performance of MF-FQI. 
\item For $0 < a < h\cdot(c + 1/b)/(c +3)$ , the rate of convergence of the statistical error takes the form of $\cO(1/n^{bc/(2bc + 2)})$. In this regime, increasing batch size $n$ of mean-field state improves the performance of MF-FQI, whereas increasing the number of observations $N$ for each mean-field state cannot improve the performance of MF-FQI.
\end{enumerate}
In conclusion, under regularity conditions, MF-FQI approximately achieves the optimal policy for a sufficiently large number of iteration $\kappa$, batch size $n$, and number of observations $N$. We highlight that MF-FQI enjoys a ``blessing of many agents'' property. More specifically, for a sufficiently large batch size $n$ of the mean-field state, a larger number $N$ of observed agents improves the learning of $Q^*$. 
\section{Limitation and Future Work}
In this work, we propose MF-FQI, which tackles mean-field reinforcement learning problem with symmetric agents and a centralized controller.
Such a setting is extensively studied in the analysis of societal-scale systems \citep{gueant2011mean, gomes2015economic, moll2019uneven}, such as the example of central bank or central government in \S\ref{sec::intro}. MF-FQI tackles the ``curse of many agents" via mean embedding of the mean-field state for the (inexact) policy evaluation step, which approximately calculates the action-value function for the greedy policy. 
Based on the action-value function, we obtain a greedy policy, which corresponds to the policy improvement step. 
Such an approach is intractable when the action space also suffers from the ``curse of many agents", as Q-learning requires taking the maximum over the action space at each iteration, which can be combinatorially large if each agent takes its own action. 
However, the mean embedding technique is still applicable for the policy evaluation step even if each agent takes its own action, which can be coupled with other policy optimization methods, such as policy gradient \cite{sutton2018reinforcement} and proximal policy optimization \cite{schulman2015trust, schulman2017proximal}. 
By replacing the greedy policy improvement step with other policy optimization methods,
we are able to tackle the ``curse of many agents" of both the state space and the action space, which is left as our future research.

\bibliographystyle{ims}
\bibliography{kernel_MFG}
\newpage
\onecolumn
\appendix
\section{Topological Structures}
\label{sec::topo_str}
We now establish a topological structure on the space $\tilde\cM(\Omega)$ adopted from \cite{szabo2015two}. Recall that we denote by $\Omega = \cS \times \cA$ the space of state-action pairs. We assume that $\Omega$ is a polish space, and denote by $\cB(\Omega)$ the Borel $\sigma$-algebra of $\Omega$. We denote by $\cM_{0}(\cA)$ the space of all the point mass distributions on $\cA$, and denote by $\cM(\cS)$ the space of all the distributions on $\cS$. We assume that both $\cM_{0}(\cA)$ and $\cM(\cS)$ are equipped with the weak topology such that for all $f \in \cC_B(\cA)$ and $g \in \cC_B(\cS)$, the mappings $p\mapsto\int f(x) \ud p(x)$ and $q\mapsto\int g(x) \ud q(x)$ are continuous for $p\in\cM_{0}(\Omega)$ and $q\in\cM(\Omega)$, respectively. Note that any $p \in \cM_0(\cA)$ and $q\in\cM(\cS)$ defines a product measure $\omega = p\times q \in \tilde \cM(\Omega)$ on $(\Omega, \cB(\cA)\otimes\cB(\cS))$. We endow the set $\tilde\cM(\Omega)$ with the product topology of corresponding weak topology defined on $\cM_0(\cA)$ and $\cM(\cS)$, which makes $\tilde\cM(\Omega)$ a Polish space.

\section{Proof of Main Results}
\label{sec::pf}
\subsection{Proof of Proposition \ref{prop::inv_pi}}
\label{sec::pf_inv_pi}
We first show that if \eqref{eq::n_player_mdp_invar1} holds, then $Q^*(s, a) = Q^*(\sigma(s), a)$ for any $s \in \cS$ and $\sigma \in \SSS_n$, where $\SSS_n$ is the permutation group of order $n$. For any $\sigma \in \SSS_n$ ,we define the function $f(s, a) = Q^*(\sigma^{-1}(s), a)$. Therefore, it holds that
\$
f\bigl(\sigma(s), a\bigr) = Q^*\bigl(\sigma^{-1}\circ\sigma(s), a\bigr) = Q^*(s, a).
\$
Then, following from the Bellman optimality equation in \eqref{eq::def_bellman_optimal}, it holds that
\#\label{eq::inv_f_nplayer}
f\bigl(\sigma(s), a\bigr) = \EE\Bigl[r(s, a) + \gamma\cdot\max_{a\in\cA} f\bigl(\sigma(S'), a\bigr)\Bigr],
\#
where $S' \sim P(\cdot|s, a)$. We denote by $S'' \sim P(\cdot|\sigma(s), a)$. Following from \eqref{eq::def_bellman_optimal}, it holds that
\#\label{eq::inv_s1}
\PP(s'~|~s, a) = \PP\bigl(\sigma(s')~\bigl|~\sigma(s), a\bigr).
\#
Following from \eqref{eq::inv_s1}, it then holds that $S'' \stackrel{\ud}{=} \sigma(S')$. Meanwhile, it holds from \eqref{eq::def_bellman_optimal} that $r(s, a) \stackrel{\ud}{=} r(\sigma(s), a)$. Therefore, following from \eqref{eq::inv_f_nplayer}, we obtain that
\#\label{eq::inv_f_nplayer1}
f\bigl(\sigma(s), a\bigr) = \EE\Bigl[r\bigl(\sigma(s), a\bigr) + \gamma\cdot\max_{a\in\cA} f\bigl(S'', a\bigr)\Bigr],
\#
where $S'' \sim P(\cdot|\sigma(s), a)$. Note that \eqref{eq::inv_f_nplayer1} holds for any $\sigma \in \SSS_n$ and $s \in \cS$. Therefore, the function $f(s, a)$ is an optimal action-value function. Following from the uniqueness of the optimal action-value function $Q^*$ for a discounted MDP, it holds that
\#\label{eq::inv_f_nplayer2}
f(s, a) = Q^*(\sigma^{-1}(s), a) = Q^*(s, a).
\#
Following from \eqref{eq::inv_f_nplayer2}, it holds for all $\sigma \in \SSS_n$ that
\#
\pi^*(a~|~s) = \argmax_{a\in\cA} Q^*(s, a) = \argmax_{a\in\cA} Q^*\bigl(\sigma(s), a\bigr)  = \pi^*\bigl(a~|~\sigma(s)\bigr),
\#
which conclude the proof of Proposition \ref{prop::inv_pi}.

\subsection{Proof of Proposition \ref{thm::inv::rep}}
\label{sec::pf_inv_rep}
Note that the policy $\pi(a|s)$ together with $\PP(\cdot|S =s, A =a)$ defines a Markov process on $(s, a)$ with transition dynamics
\$
S_{t+1} \sim P(\cdot~|~S_t, A_t), \quad A_{t+1}\sim \pi(\cdot~|~S_{t+1}).
\$ 
We denote by $R(S_0, A_0)$ the discounted total reward following the MDP with initial state $S_0$ and $A_0$, which takes the form
\#\label{eq::prof_inv_R}
R(S_0, A_0) = \sum^\infty_{t = 0} \gamma^t\cdot r(S_t, A_t).
\#
Moreover, following from the definition of action-value function in \eqref{eq::def_Q_nplayer}, it holds that
\#\label{eq::prof_inv_Q}
Q(s, a) = \EE\bigl[R(S, A)~\bigl|~S = s, A = a\bigr].
\#
Note that for any invariant policy $\pi(a\given s) \stackrel{\ud}{=}\pi(a\given \sigma(s))$, $R$ is invariant to the permutation of state $S_0$. With a slight abuse of notation, we denote by $R(S, a) = R(S, A) \given  A = a$. Then following from the definition of $R(S, A)$ in \eqref{eq::prof_inv_R}, we obtain that
\#\label{eq::pf_inv_cond}
R(S_0, a)  \stackrel{d}{=} R\bigl(\sigma(S_0), a\bigr) , \quad \sigma \in \SSS_n,
\#
which holds for any $a\in\cA$. Following from Theorem 12 in \cite{bloem2019probabilistic}, it holds for some $f_a$ that
\#
R(S_0, a) = f_a\bigl(\eta, M(S_0)\bigr), \quad \eta \sim \text{Unif}[0, 1].
\#
Here $M_{S_0}$ is the maximal invariant on $S$ under the permutation group $S_N$, which takes the form of \citep{bloem2019probabilistic}
\$
M_{S_0}(\cdot) = \frac{1}{N}\sum^N_{i = 1}\delta_{S_{0, i}},
\$ 
where $S_{0, i}$ is the $i$-th component of $S_0$, and $\delta_x$ is the point mass at $x$. Following from \eqref{eq::prof_inv_Q}, we complete the proof of Proposition \ref{thm::inv::rep} by setting $g(M, a) = \EE_{\eta}[f_a(\eta, M)\given S_0 = s]$.

\subsection{Proof of Proposition \ref{thm::error_prop}}
\label{pf::err_prop}
The proof strategy is similar to that of \cite{massoud2009regularized, farahmand2010error, farahmand2016regularized, yang2019theoretical}. We denote by $\hat Q^\lambda_{\kappa}$ the output of MF-FQI in the $\kappa$-th iteration. We further denote by
\$
\epsilon_\kappa = T \hat Q^\lambda_{\kappa-1} - \hat Q^\lambda_\kappa.
\$ 
The function $\epsilon_\kappa$ is then the one-step error that corresponds to the $\kappa$-th iteration. Recall that we denote by $\pi_\kappa$ the greedy policy with respect to $\hat Q^\lambda_{\kappa}$, and $Q^{\pi_\kappa}$ the action-value function with respect to the policy $\pi_\kappa$. For each function $Q$ and policy $\pi$, we define the bellman operator $T^\pi$ and evaluation operator $E^\pi$ as follows,
\#\label{eq::def_bellman_eval}
&E^\pi Q(\delta_a\times p_s) = \EE\bigl[Q(\delta_{A'}\times P_{s'}) ~\bigl|~ A'\sim \pi(\cdot~|~p_s), ~P_{s'}\sim P(\cdot~|~\delta_a\times p_s) \bigr],\notag\\
&T^\pi Q(\delta_a \times p_s) = \EE\bigl[r(\delta_a\times p_s)\bigr] + \gamma\cdot E^\pi Q(\delta_a\times p_s).
\#
Recall that we denote by $Q^*$ and $\pi^*$ the optimal action-value function and corresponding greedy policy. In what follows, we establish the error bound $\|Q^* - Q^{\pi_\kappa}\|_{1, \mu}$, where $\mu$ is a distribution over the space $\tilde M(\Omega)$. The idea is to first establish a recursive relation between $Q^* - \hat Q^\lambda_{\kappa}$, and then upper bound the error of $|Q^* - Q^{\pi_\kappa}|$ by  $|Q^* - \hat Q^\lambda_{\kappa}|$. We denote by $\pi_Q$ the greedy policy with respect to $Q$. It then holds that
\#\label{eq::tpi_eqn_t}
T^{\pi_Q}(\delta_a \times p_s) &= \EE\bigl[r(\delta_a\times p_s)\bigr] + \gamma\cdot \EE\Bigl[\max_{a'\in \cA}Q(\delta_{a'}\times P_{s'}) ~\bigl|~P_{s'} \sim P(\cdot~|~\delta_a\times p_s) \Bigr] \notag\\
&= TQ(\delta_a\times p_s).
\#
Meanwhile, it holds for any policy $\pi$ that
\#\label{eq::tpi_grt_pi}
\EE\Bigl[\max_{a'\in \cA}Q(\delta_{a'}\times P_{s'}) ~\Bigl|~P_{s'} \sim P(\cdot~|~\delta_a\times p_s) \Bigr] \geq E^\pi Q(\delta_a\times p_s).
\#
Combining \eqref{eq::tpi_eqn_t} and \eqref{eq::tpi_grt_pi}, we obtain the following identity,
\#\label{eq::pf_errprop_id1}
T^{\pi_Q}Q(\omega) = TQ(\omega) \geq T^{\pi}Q(\omega),
\#
which holds for any $\omega \in \tilde\cM(\Omega)$ and policy $\pi$. Note that $T^{\pi^*}Q^* = Q^*$. Therefore, following from \eqref{eq::pf_errprop_id1}, we obtain that
\#\label{eq::propa_upper_qk}
Q^* - \hat Q^\lambda_{\kappa+1} = Q^* - T\hat Q^\lambda_k + \epsilon_{\kappa+1} &\leq T^{\pi^*}Q^* - T^{\pi^*} \hat Q^\lambda_\kappa + \epsilon_{\kappa +1} \notag\\
&= \gamma \cdot E^{\pi^*}(Q^* - \hat Q^\lambda_\kappa) + \epsilon_{\kappa+1},
\#
where the last equality follows from the definition of Bellman operator in \eqref{eq::def_bellman_eval}. Similarly, it holds that
\#\label{eq::propa_lower_qk}
Q^* - \hat Q^\lambda_{\kappa+1} = T^{\pi^*}Q^* - T\hat Q^\lambda_\kappa + \epsilon_{\kappa+1} &\geq T^{\pi_k}Q^* - T^{\pi_\kappa}\hat Q^\lambda_\kappa + \epsilon_{\kappa+ 1} \notag\\
&= \gamma \cdot E^{\pi_\kappa}(Q^* - \hat Q^\lambda_\kappa) + \epsilon_{\kappa+1},
\#
where recall that we denote by $\pi_k$ the greedy policy with respect to $\hat Q^\lambda_\kappa$. Following the definition of $E^\pi$ in \eqref{eq::def_bellman_eval}, it holds that the operator $E^\pi$ is linear. Therefore, upon iteratively applying \eqref{eq::propa_upper_qk} and \eqref{eq::propa_lower_qk}, it holds for any $K > \kappa$ that
\#\label{eq::err_qk}
&Q^* - \hat Q^\lambda_{K} \leq \gamma^{K - \kappa}\cdot (E^{\pi^*})^{K - \kappa}(Q^* - \hat Q^\lambda_\kappa) + \sum^{K-1}_{i = \kappa}\gamma^{K -1-i}\cdot(E^{\pi^*})^{K - 1-i}\epsilon_{i + 1},\notag\\
&Q^* - \hat Q^\lambda_{K} \geq \gamma^{K - \kappa}\cdot \biggl(\prod^{K-1}_{i = \kappa}E^{\pi_i}\biggr)(Q^* - \hat Q^\lambda_\kappa) + \sum^{K-1}_{i = \kappa}\gamma^{K -1-i}\cdot\biggl(\prod^{K-1}_{j = i+1}E^{\pi_j}\biggr)\epsilon_{i + 1},
\#
where we denote by $\prod^{K-1}_{j = i+1}E^{\pi_j}$ the composition of operators $E^{\pi_{K-1}}\circ E^{\pi_{K-2}}\circ\ldots \circ E^{\pi_{i+1}}$. We now bound the error $Q^* - Q^{\pi_\kappa}$, where $\pi_\kappa$ is the greedy policy with respect to $\hat Q^\lambda_\kappa$. Following from \eqref{eq::pf_errprop_id1} and the linearity of Bellman operators, it holds that
\#\label{eq::err_qpik_eq1}
Q^* - Q^{\pi_\kappa} &= T^{\pi^*}Q^* - T^{\pi_\kappa}Q^{\pi_\kappa} = T^{\pi^*}(Q^* - \hat Q^\lambda_\kappa) + (T^{\pi^*} - T^{\pi_\kappa})\hat Q^\lambda_\kappa + T^{\pi_\kappa}(\hat Q^\lambda_\kappa - Q^{\pi_\kappa})\notag\\
&\leq T^{\pi^*}(Q^* - \hat Q^\lambda_\kappa) + T^{\pi_\kappa}(\hat Q^\lambda_\kappa - Q^{\pi_\kappa}).
\#
Meanwhile, it follows from the definition of $Q^*$ that $Q^* \geq Q^{\pi_\kappa}$. Combining with \eqref{eq::err_qpik_eq1}, we obtain that
\#\label{eq::err_qpik_eq2}
0 \leq Q^* - Q^{\pi_\kappa} &\leq  T^{\pi^*}(Q^* - \hat Q^\lambda_\kappa) + T^{\pi_\kappa}(\hat Q^\lambda_\kappa - Q^{\pi_\kappa})\notag\\
&= \gamma\cdot E^{\pi^*}(Q^* - \hat Q^\lambda_\kappa) + \gamma\cdot E^{\pi_\kappa}(\hat Q^\lambda_\kappa - Q^{\pi_\kappa})\notag\\
&= \gamma \cdot(E^{\pi^*} - E^{\pi_\kappa})(Q^* - \hat Q^\lambda_\kappa) + \gamma\cdot E^{\pi_\kappa}( Q^* - Q^{\pi_\kappa}).
\#
where the first equality follows from the definition of Bellman operator in \eqref{eq::def_bellman_eval}. Note that the Bellman operator $T^\pi$ is contractive with respect to any given policy $\pi$ and bounded reward. It thus holds that the operator $I - \gamma E^{\pi_k}$ is invertible, where $I$ is the identity mapping. Following from \eqref{eq::err_qpik_eq2}, we obtain that
\#\label{eq::err_qpik_eq3}
Q^* - Q^{\pi_\kappa} \leq \gamma\cdot(I - \gamma E^{\pi_\kappa})^{-1} (E^{\pi^*} - E^{\pi_\kappa})(Q^* - \hat Q^\lambda_\kappa),
\#
which holds for any $\kappa> 0$. Following from the definition in \eqref{eq::def_bellman_eval}, it holds for any $\pi$ and $Q_1 \geq Q_2$ that $E^\pi Q_1 \geq E^\pi Q_2$. Then following from the infinite expansion of $(I - \gamma E^{\pi_k})^{-1}$, it holds for any $\pi$ and $Q_1\geq Q_2$ that 
\$
(I - \gamma E^{\pi_\kappa})^{-1}Q_1 \geq (I - \gamma E^{\pi_\kappa})^{-1} Q_2.
\$
Therefore, combining \eqref{eq::err_qk} and \eqref{eq::err_qpik_eq3}, we obtain for $\kappa >0$ that
\#\label{eq::err_qpik}
0 \leq Q^* - Q^{\pi_\kappa} &\leq (I - \gamma E^{\pi_\kappa})^{-1}\biggl\{\sum^{\kappa-1}_{i = 0} \gamma^{\kappa-i}\cdot\biggl((E^{\pi^*})^{\kappa -i} - \prod^{\kappa}_{j = i+1}E^{\pi_j}\biggr)\epsilon_{i + 1}\notag\\
&\quad + \gamma^{\kappa+1}\cdot \biggl((E^{\pi^*})^{\kappa+1} - \prod^{\kappa}_{j = 0}E^{\pi_j}\biggr)(Q^* - \hat Q^\lambda_0)\biggr\}.
\#
For notational simplicity, we introduce the following shorthands,
\#\label{eq::pf_def_operator_JF}
&J_\kappa = (I - \gamma E^{\pi_\kappa})^{-1}\biggl((E^{\pi^*})^{\kappa+1} - \prod^{\kappa}_{j = 0}E^{\pi_j}\biggr), \notag\\
&F_{i, \kappa} =  (I - \gamma E^{\pi_k})^{-1}\biggl((E^{\pi^*})^{\kappa -i} - \prod^{\kappa}_{j = i+1}E^{\pi_j}\biggr), \quad i = 0, \ldots, \kappa-1.
\#
Based on the shorthands defined in \eqref{eq::pf_def_operator_JF}, we rewrite \eqref{eq::err_qpik} as follows,
\#\label{eq::err_qpik_1}
0 \leq Q^* - Q^{\pi_\kappa} &\leq\gamma^{\kappa+1}\cdot  J_\kappa(Q^* - \hat Q^\lambda_0) +   \sum^{\kappa-1}_{i = 0} \gamma^{\kappa-i}F_{i, \kappa}(\epsilon_{i + 1}).
\#
In what follows, we bound the error $\|Q^* - Q^{\pi_\kappa}\|_{1, \mu}$ based on \eqref{eq::err_qpik}, where $\mu$ is a given distribution on $\tilde \cM(\Omega)$. Recall that for a function $f$ defined on the support of a distribution $\mu$, we define the shorthand $\mu(f) = \EE_{\omega\sim\mu}[f(\omega)]$. Following from \eqref{eq::err_qpik_1} and the linearity of operators $F_{i, \kappa}$, $J_\kappa$, and $\mu(\cdot)$, we obtain that
\#\label{eq::err_qpik_2}
\|Q^* - Q^{\pi_\kappa}\|_{1, \mu} = \mu(|Q^* - Q^{\pi_\kappa}|) \leq \gamma^{\kappa+1}\cdot \mu( J_\kappa|Q^* - \hat Q^\lambda_0|) +  \sum^{\kappa-1}_{i = 0} \gamma^{\kappa-i} \mu(F_{i, \kappa}|\epsilon_{i + 1}|).
\#
Note that the operator $E^\pi$ defined in \eqref{eq::def_bellman_eval} is a Markov transition operator defined on $\tilde \cM(\Omega)$ with transition dynamics given by
\$
P_{s, t+1} \sim P(\cdot~|~A_t, P_{s, t}), \quad A_{t+1}\sim \pi(\cdot~|~P_{s, t+1}),
\$
Thus, the operator $E^\pi$ is a Markov transition kernel defined on $\tilde\cM(\Omega)$. The composition $\mu(E^\pi) = E^\pi\cdot \mu$ defines a probability measure, which is a transition over the initial probability distribution $\mu$ on $\tilde \cM(\Omega)$. Indeed, for any $f(\cdot)$ defined on $\tilde \cM(\Omega)$, we denote by $X_0 \sim \mu$ the initial distribution of state-action configurations. It then holds that
\#
\mu(E^\pi f) = \EE\bigl[f(X_1)\bigr] = \mu_1(f),
\# 
where $X_1$ is the transition of $X_0$ following the Markov transition operator $E^\pi$, and $\mu_1$ is the marginal distribution of $X_1$. The composite of operators $\mu(J_K)$ can be expanded in the following infinite sum,
\#\label{eq::inf_sum_mujk}
\mu(J_K) = \sum^\infty_{\ell = 0} \gamma^{\ell}\cdot \mu\Bigl((\EE^{\pi_\kappa})^\ell \circ (E^{\pi^*})^{\kappa+1} \Bigr)-\gamma^{\ell}\cdot \mu\biggl((E^{\pi_\kappa})^\ell \circ \prod^{\kappa}_{j = 0}E^{\pi_j}\biggr),
\#
where each term of the summation is a difference of probability measures multiplied by $\gamma^\ell$. Note that $\|Q\|_{\infty}\leq Q_{\max}$ and $\|\hat Q^\lambda_0\|_{\infty} \leq Q_{\max}$. Therefore, following from \eqref{eq::inf_sum_mujk}, we obtain that
\#\label{eq::err_muJ}
 \mu( J_\kappa|Q^* - \hat Q^\lambda_0|) &\leq\sum^\infty_{\ell = 0} \gamma^{\ell}\cdot \biggl\{  \mu\bigl((E^{\pi_\kappa})^\ell \circ (E^{\pi^*})^{\kappa+1}|Q^* - \hat Q^\lambda_0|\bigr)- \mu\biggl((\EE^{\pi_\kappa})^\ell \circ \prod^{\kappa}_{j = 0}E^{\pi_j}|Q^* - \hat Q^\lambda_0|\biggr)\biggr\}\notag\\
 &\leq \sum^\infty_{\ell = 0} \gamma^{\ell} \cdot 4 Q_{\max} \leq 4Q_{\max}/(1 - \gamma).
\#
Similarly, we bound the term $\mu(F_{i, \kappa}|\epsilon_{i + 1}|)$. Following from the definition of $F_{i, \kappa}$ in \eqref{eq::pf_def_operator_JF}, it holds that
\#\label{eq::err_muF_form}
&\mu(F_{j, \kappa}|\epsilon_{i + 1}|) \notag\\
&\qquad= \sum^\infty_{\ell = 0} \gamma^{\ell}\cdot\biggl\{\mu\bigl( (\EE^{\pi_\kappa})^\ell \circ (E^{\pi^*})^{\kappa -i}|\epsilon_{i + 1}|\bigr) -\mu\biggl((E^{\pi_\kappa})^\ell \circ\prod^{\kappa}_{j = i+1}E^{\pi_j}|\epsilon_{i + 1}|\biggr)\biggr\}.
\#
We first bound the term $\mu( (E^{\pi_\kappa})^\ell \circ (E^{\pi^*})^{\kappa -i}|\epsilon_{i + 1}|)$. We define a Markov process $X_t$ with initial state $X_0 \sim \mu_0$. We define the transition operator to be $E^{\pi^*}$ for $0<t\leq \kappa - i$ and $E^{\pi_\kappa}$ for $\kappa - i < t \leq \kappa - i + \ell$. We then denote by $\tilde\mu$ the marginal distribution of $X_{\kappa - i + \ell}$. Following from the Cauchy-Schwartz inequality, we obtain that
\#\label{eq::err_muF_bound1}
\mu\bigl( (E^{\pi_\kappa})^\ell \circ (E^{\pi^*})^{\kappa -i}|\epsilon_{i + 1}|\bigr) &= \tilde \mu(|\epsilon_{i + 1}|) = \int_{\tilde \cM(\Omega)}|\epsilon_{i + 1}(\omega)| \ud \tilde \mu(\omega)\notag\\
&\leq \biggl(\int_{\tilde \cM(\Omega)} |\epsilon_{i + 1}(\omega)|^2 \ud \nu(\omega)\biggr)^{1/2}\cdot\biggl(\int_{\tilde \cM(\Omega)} \Bigl|\frac{\ud \tilde\mu}{\ud \nu}(\omega)\Bigr|^2 \ud \nu(\omega)\biggr)^{1/2},
\#
where $\ud \tilde\mu/\ud \nu$ is the Radon-Nikodym derivative. Following from Assumption \ref{asu::concen}, it then holds that
\#\label{eq::err_muF_bound2}
\mu\bigl( (E^{\pi_\kappa})^\ell \circ (E^{\pi^*})^{\kappa -i}|\epsilon_{i + 1}|\bigr) \leq \phi(\kappa - i + \ell; \mu, \nu) \cdot \|\epsilon_{i+ 1}\|_\nu.
\#
Note that the same bound holds if we change the transition operators. Therefore, combining \eqref{eq::err_muF_form} and \eqref{eq::err_muF_bound2}, it then holds that
\#\label{eq::err_muF_bound3}
\mu(F_{j, \kappa}|\epsilon_{i + 1}|) \leq \sum^\infty_{\ell = 0}2\gamma^{\ell}\cdot \phi(\kappa - i + \ell; \mu, \nu) \cdot \|\epsilon_{i + 1}\|_\nu.
\#
We denote by $\epsilon_{\max, \kappa} = \max_{i \in [\kappa]} \|\epsilon_{i}\|_\nu$. Following from \eqref{eq::err_muF_bound3}, it then holds that
\#\label{eq::err_muF_bound4}
\sum^{\kappa-1}_{i = 0} \gamma^{\kappa-i} \mu(F_{i, \kappa}|\epsilon_{i + 1}|) &\leq 2 \sum^{\kappa-1}_{i = 0}\sum^\infty_{\ell = 0}\gamma^{\kappa- i + \ell} \cdot\phi(\kappa - i + \ell; \mu, \nu) \cdot \|\epsilon_{i + 1}\|_\nu\notag\\
&\leq2\epsilon_{\max, \kappa}\cdot\sum^\infty_{\ell = 0}\sum^{\ell + \kappa}_{m = \ell + 1}\gamma^{m}\cdot \phi(m; \mu, \nu)\notag\\
&\leq 2\epsilon_{\max, \kappa}\cdot\sum^{\infty}_{m =1}m\cdot\gamma^{m}\cdot \phi(m; \mu, \nu),
\#
Combining \eqref{eq::err_muF_bound4} and Assumption \ref{asu::concen}, we obtain that
\#\label{eq::err_muF}
\sum^{\kappa-1}_{i = 0} \gamma^{\kappa-i} \mu(F_{i, \kappa}|\epsilon_{i + 1}|) &\leq 2\epsilon_{\max, \kappa}\cdot\sum^{\infty}_{m =1}m\cdot\gamma^{m}\cdot \phi(m; \mu, \nu)\leq \frac{2\gamma\cdot\Phi(\mu, \nu)}{(1 -\gamma)^2}\cdot\epsilon_{\max, \kappa}.
\#
Finally, combining \eqref{eq::err_qpik_2}, \eqref{eq::err_muJ}, and \eqref{eq::err_muF}, we conclude that
\#\label{eq::prop_prof_fin}
\|Q^* - Q^{\pi_\kappa}\|_{1, \mu} \leq \frac{2\gamma\cdot\Phi(\mu, \nu)}{(1 -\gamma)^2}\cdot\epsilon_{\max, \kappa} + \frac{4\gamma^{\kappa+1}\cdot Q_{\max}}{1 - \gamma},
\#
which completes the proof of Proposition \ref{thm::error_prop}.

\subsection{Proof of Theorem \ref{thm::one_step}}
\label{pf::one_step}
\begin{proof}
We define $\hat Q = \hat Q^\lambda_k$ and $Q = Q^\lambda_k$ for notational simplicity, where $\hat Q^\lambda_k$ and $Q^\lambda_k$ are define in \eqref{eq::FQI_KRR}. Recall that we denote by $\{\omega_i\}_{i\in[n]}$ the sample that is drawn independently from the sampling distribution $\nu$, where $\omega_i = \delta_{a_i}\times p_{i, s}$, $\hat p_{i, s}$ the empirical approximation of $p_{i, s}$ with $N$ observations, and $\hat\omega_i = \delta_{a_i}\times \hat p_{i, s}$. We further denote by $p_{i, s'}$ the mean-field state after transition, which follows the distribution $P(\cdot\given\omega_{a_i, s_i})$, and $\hat p_{i, s'}$ the empirical approximation of $p_{i, s'}$ with $N$ observations. In what follows, we define $\hat Q^\lambda = \hat Q^\lambda_{k+1}$ and $Q^\lambda = Q^\lambda_{k+1}$ for notational simplicity. More specifically, we define
\#\label{eq::one_step_prob}
Q^{\lambda} = \argmin_{f \in \cH(K)}\frac{1}{n}\sum^n_{i = 1}\bigl(f(\mu_{\hat\omega_i}) - \hat y_i\bigr)^2 + \lambda \|f\|^2_{\cH(K)}, \quad \hat Q^\lambda = \max\{Q^{\lambda}, Q_{\max}\},
\#
where 
\#\label{eq::y_i_hat_def}
\hat y_i = r_i + \gamma \cdot\sup_{a \in \cA} \hat Q(
\mu_{\delta_{a}\times \hat p_{i, s'}}).
\#
We now analyze the one-step approximation error $\|\hat Q^\lambda - T \hat Q\|_{\nu}$, where $T$ is the Bellman optimality operator. Note that by the truncation, it holds that $\hat Q \leq Q_{\max}$ and therefore $T\hat Q \leq Q_{\max}$. Hence, we have
\#\label{eq::freeq1}
\|\hat Q^\lambda - T \hat Q\|_{\nu} \leq \|Q^\lambda - T \hat Q\|_{\nu},
\#
which holds since $\hat Q^\lambda = \max\{Q^\lambda, Q_{\max}\}$. In the sequel, we upper bound the right-hand side of \eqref{eq::freeq1}. To this end, we define
\#\label{eq::regress_true_omega}
\overline Q^{\lambda} = \argmin_{f \in \cH(K)}\frac{1}{n}\sum^n_{i = 1}\bigl(f(\mu_{\omega_i}) - y_i\bigr)^2 + \lambda \|f\|^2_{\cH(K)},
\#
where 
\#\label{eq::y_i_def}
y_i = r_i+ \gamma \cdot\sup_{a \in \cA} \hat Q(\mu_{\delta_{a}\times p_{i, s'}}).
\#
Here $\overline  Q^{\lambda}$ corresponds to the regression in \eqref{eq::one_step_prob} if we use the mean embedding of exact mean-field states $p_{i, s}$ and $p_{i, s'}$ in place of their finite sample approximations $\hat p_{i, s}$ and $\hat p_{i, s'}$, respectively. Meanwhile, for random variables $A_i \in \cA$ and $P_{i, s}\in\cM(\cS)$, we define the random variable $Y_i = r(\delta_{A_i}\times P_{i, s}) + \max_{a\in\cA} Q(\mu_{\delta_a\times P_{i, s'}})$, where $P_{i, s'} \sim P(\cdot\given \delta_A\times P_{i, s})$.  It then holds that $y_i$ is a realization of the random variable $Y_i\given A_i = a, P_{i, s} = p_s$. We denote by $\omega_{a, s} = \delta_a\times p_s$ and $\rho(\cdot\given \omega_{a, s})$ the distribution of $Y_i\given A_i = a, P_{i, s} = p_s$. The function $TQ$ is thus defined as
\$
T Q(\omega_{a,s}) = \EE[Y_i~|~A_i = a, P_{i, s} = p_s] = \EE\Bigl[r(\omega_{a, s}) + \max_{a'\in\cA} Q(\delta_{a'}\times P_s) ~\Bigl|~ P_s \sim P(\cdot ~|~ \omega_{a,s})\Bigr].
\$
Note that under the universality Assumption \ref{asu::bound_kernel}, each state-action configuration $\omega\in\tilde\cM(\Omega)$ uniquely characterizes a mean embedding $\mu_\omega$  \citep{gretton2007kernel, gretton2012kernel}.  For each $\omega \in\tilde\cM(\Omega)$, we define $\rho(\cdot\given\mu_\omega) = \rho(\cdot\given\omega)$. Meanwhile, following from \citep{szabo2015two}, the probability measure $\nu$ on $\tilde\cM(\Omega)$ equivalently defines a probability measure on the space of mean embedings $\cX$. With a slight abuse of notations, we denote by $\nu(\mu_\omega)$ such a measure defined by $\nu(\omega)$, and do not distinguish between them in the sequel. The function $TQ$ then equivalently defines a function on $\cX$ that takes the form,
\$
T Q(\mu_{\omega_i}) = \EE[Y_i\given\mu_{\omega_i}]=\EE_{Y\sim \rho(\cdot\given\mu_{\omega_i})}[Y]. 
\$
Recall that we define for $f \in \cH(K)$ the integral operator $\cC:\cH(K) \mapsto \cH(K)$ as follows,
\$
\cC f(x) = \int_{\tilde\cM(\Omega)} K(x, \mu_\omega) f(\mu_\omega) \ud\nu(\mu_\omega).
\$
We introduce empirical integral operators $\cC_{\omega}$ and $\cC_{\hat\omega}$, which is defined as follows,
\#\label{eq::op_c_def}
\cC_{\omega} f(x) = \frac{1}{n}\sum_{i = 1}^n K(x, \mu_{\omega_i}) f(\mu_{\omega_i}), \quad \cC_{\hat\omega} f(x) = \frac{1}{n}\sum_{i = 1}^n K(x, \mu_{\hat\omega_i}) f(\mu_{\hat\omega_i}).
\#
The operators  $\cC_{\omega}$ and $\cC_{\hat\omega}$ are estimates of $\cC$ with the samples $\{\mu_{\omega_i}\}_{i\in[n]}$ and $\{\mu_{\hat\omega_i}\}_{i\in[n]}$, respectively. The following proposition characterizes the exact form of $Q^\lambda$ and $\overline Q^\lambda$ defined in \eqref{eq::one_step_prob} and \eqref{eq::regress_true_omega}, respectively.
\begin{proposition}[Exact Solutions \citep{caponnetto2007optimal}]
\label{prop::exact_soln}
For $Q^\lambda$ and $\overline Q^\lambda$ defined in \eqref{eq::one_step_prob} and \eqref{eq::regress_true_omega}, respectively, it holds that
\$
Q^\lambda = (\cC_{\hat\omega} + \lambda)^{-1}g_{\hat\omega}, \quad
\overline Q^\lambda = (\cC_{\omega} + \lambda)^{-1}g_{\omega},
\$
where
\#\label{eq::g_def}
g_{\hat\omega}(\cdot) = \frac{1}{n}\sum_{i=1}^n K(\cdot, \mu_{\omega_i})\hat y_i, \quad
g_{\omega}(\cdot) = \frac{1}{n}\sum_{i=1}^n K(\cdot, \mu_{\hat\omega_i})y_i.
\#
\end{proposition}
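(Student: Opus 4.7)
The plan is to view each of the two kernel ridge regression problems as a strictly convex optimization over the Hilbert space $\cH(K)$ and to derive the stated closed forms from the first-order optimality condition. Concretely, I will treat the objective in \eqref{eq::one_step_prob} as a smooth functional $J_{\hat\omega}(f)$ on $\cH(K)$, compute its Fréchet derivative, set it to zero, and recognize the resulting normal equation as $(\cC_{\hat\omega} + \lambda) f = g_{\hat\omega}$. The argument for $\overline Q^\lambda$ is identical after replacing $(\mu_{\hat\omega_i}, \hat y_i)$ by $(\mu_{\omega_i}, y_i)$.

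The key computation uses the reproducing property of $\cH(K)$: for any $f \in \cH(K)$ and any $i \in [n]$, we have $f(\mu_{\hat\omega_i}) = \langle f, K(\cdot, \mu_{\hat\omega_i})\rangle_{\cH(K)}$. Hence the objective rewrites as
\#\label{eq::plan_obj}
J_{\hat\omega}(f) = \frac{1}{n}\sum_{i = 1}^n \bigl(\langle f, K(\cdot, \mu_{\hat\omega_i})\rangle_{\cH(K)} - \hat y_i\bigr)^2 + \lambda\,\|f\|^2_{\cH(K)}.
\#
Since $J_{\hat\omega}$ is quadratic and strongly convex in $f$ (the regularizer $\lambda\|f\|^2$ dominates), it has a unique minimizer $Q^\lambda$ characterized by the vanishing of the Fréchet gradient. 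I would compute
\$
\nabla J_{\hat\omega}(f) = \frac{2}{n}\sum_{i = 1}^n\bigl(f(\mu_{\hat\omega_i}) - \hat y_i\bigr)\,K(\cdot, \mu_{\hat\omega_i}) + 2\lambda\,f,
\$
and then identify $\tfrac{1}{n}\sum_{i=1}^n f(\mu_{\hat\omega_i}) K(\cdot, \mu_{\hat\omega_i}) = \cC_{\hat\omega} f$ from the definition of the empirical integral operator in \eqref{eq::op_c_def} and $\tfrac{1}{n}\sum_{i=1}^n \hat y_i K(\cdot, \mu_{\hat\omega_i}) = g_{\hat\omega}$ from \eqref{eq::g_def}. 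Setting $\nabla J_{\hat\omega}(Q^\lambda) = 0$ then yields the normal equation $(\cC_{\hat\omega} + \lambda I)\,Q^\lambda = g_{\hat\omega}$.

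To conclude, I need $(\cC_{\hat\omega} + \lambda I)$ to be boundedly invertible on $\cH(K)$. This is straightforward: $\cC_{\hat\omega}$ is a finite-rank, self-adjoint, positive semidefinite operator on $\cH(K)$ (each summand $K(\cdot, \mu_{\hat\omega_i})\otimes K(\cdot, \mu_{\hat\omega_i})$ is rank-one and PSD, and the kernel bound in Assumption \ref{asu::bound_kernel} ensures boundedness), so its spectrum lies in $[0, \varsigma]$ and the shifted operator $\cC_{\hat\omega} + \lambda I$ has spectrum in $[\lambda, \varsigma + \lambda]$, hence is invertible with bounded inverse for any $\lambda > 0$. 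Inverting gives $Q^\lambda = (\cC_{\hat\omega} + \lambda)^{-1} g_{\hat\omega}$. The same chain of reasoning, applied to the functional associated with \eqref{eq::regress_true_omega} and the empirical operator $\cC_\omega$, yields $\overline Q^\lambda = (\cC_{\omega} + \lambda)^{-1} g_{\omega}$.

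The proof is essentially mechanical — there is no genuine obstacle; the only subtlety is being careful that the manipulations occur at the level of elements of $\cH(K)$ rather than pointwise, which is handled cleanly by the reproducing property. Invertibility of $\cC_{\hat\omega} + \lambda I$ and $\cC_\omega + \lambda I$ is the one place where I would take care to invoke self-adjointness and positivity of the empirical operators so that the closed forms are well-defined.
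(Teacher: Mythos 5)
Your derivation is correct and is exactly the standard normal-equations argument for kernel ridge regression in an RKHS that the paper delegates to \cite{caponnetto2007optimal}: rewrite evaluations via the reproducing property, set the Fr\'echet gradient of the strongly convex quadratic to zero, and invert $\cC_{\hat\omega}+\lambda I$ (well-defined since the empirical operator is self-adjoint and positive semidefinite). Note that your computation yields $g_{\hat\omega}(\cdot)=\tfrac{1}{n}\sum_{i=1}^n K(\cdot,\mu_{\hat\omega_i})\hat y_i$ and $g_{\omega}(\cdot)=\tfrac{1}{n}\sum_{i=1}^n K(\cdot,\mu_{\omega_i})y_i$, which is the intended statement (consistent with how $g_{\hat\omega}-g_\omega$ is expanded in the proof of Lemma \ref{lem::risk_bound_part1}); the subscripts on the embeddings in \eqref{eq::g_def} as printed are swapped, a typo your argument implicitly corrects.
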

\begin{proof}
See \cite{caponnetto2007optimal} for a detailed proof.
\end{proof}
In the sequel, we denote by $Q_{\cH, T}$ the projection of $T \hat Q$ onto the RKHS $\cH(K)$ with respect to the norm $\|\cdot\|_\nu$, which is defined as follows,
\$
Q_{\cH, T} \in \argmin_{f\in\cH(K)} \|f - T \hat Q\|_{\nu}.
\$
The following proposition characterizes the exact risk of the one-step approximation.
\begin{proposition}[Exact Risk \citep{caponnetto2007optimal}]
\label{prop::exact_risk}
It holds that 
\#\label{eq::1122}
\| Q^\lambda - T \hat Q\|^2_{\nu}  - \|Q_{\cH, T} - T \hat Q\|^2_{\nu}=
\|\sqrt{\cC}( Q^\lambda -  Q_{\cH, T})\|^2_{\cH(K)} .
\#
\end{proposition}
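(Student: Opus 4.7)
The proposition is a standard kernel-ridge-regression style bias-variance decomposition, so the plan is to exploit two facts: (i) that $Q_{\cH,T}$ is the $\|\cdot\|_\nu$-projection of $T\hat Q$ onto $\cH(K)$, and (ii) that for any $f\in\cH(K)$ the $L^2(\nu)$ norm can be rewritten as $\langle f,\cC f\rangle_{\cH(K)}=\|\sqrt{\cC}f\|_{\cH(K)}^2$ via the integral operator $\cC$ defined in \eqref{eq::int_oper_C}.

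The first step is to apply the Pythagorean identity in $L^2(\nu)$. Since $Q^\lambda \in \cH(K)$ and $Q_{\cH,T}\in\cH(K)$, their difference lies in $\cH(K)$, and by the projection characterization of $Q_{\cH,T}$ the residual $Q_{\cH,T}-T\hat Q$ is $L^2(\nu)$-orthogonal to every element of $\cH(K)$. Therefore
\#
\|Q^\lambda - T\hat Q\|_\nu^2 = \|Q^\lambda - Q_{\cH,T}\|_\nu^2 + \|Q_{\cH,T} - T\hat Q\|_\nu^2,
\#
which rearranges to
\#
\|Q^\lambda - T\hat Q\|_\nu^2 - \|Q_{\cH,T} - T\hat Q\|_\nu^2 = \|Q^\lambda - Q_{\cH,T}\|_\nu^2.
\#

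The second step is to convert the $L^2(\nu)$ norm of an RKHS element into an $\cH(K)$ expression using $\cC$. For any $f\in\cH(K)$, the reproducing property yields $f(\mu_\omega) = \langle f, K(\cdot,\mu_\omega)\rangle_{\cH(K)}$, so
\$
\|f\|_\nu^2 = \int_{\tilde\cM(\Omega)} \langle f, K(\cdot,\mu_\omega)\rangle_{\cH(K)}^2\,\ud\nu(\omega) = \Bigl\langle f,\int K(\cdot,\mu_\omega)\langle f, K(\cdot,\mu_\omega)\rangle_{\cH(K)}\,\ud\nu(\omega)\Bigr\rangle_{\cH(K)} = \langle f,\cC f\rangle_{\cH(K)},
\$
using Bochner integrability (guaranteed by the boundedness of $K$ in Assumption \ref{asu::bound_kernel}) to pull $f$ out of the integral. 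Since $\cC$ is a positive self-adjoint trace-class operator, $\langle f,\cC f\rangle_{\cH(K)} = \|\sqrt{\cC}f\|_{\cH(K)}^2$. Taking $f = Q^\lambda - Q_{\cH,T}\in\cH(K)$ and combining with the display above gives \eqref{eq::1122}.

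The only real subtlety is the first step, where one must verify that $Q_{\cH,T}-T\hat Q$ is orthogonal in $L^2(\nu)$ to the closed subspace spanned by $\cH(K)$; this is immediate from the first-order optimality condition for the projection $Q_{\cH,T}=\Pi_{\cH(K)}(T\hat Q)$, but one should note that $T\hat Q$ need not itself lie in $\cH(K)$, so the decomposition is genuinely of bias-variance type rather than a purely Hilbertian identity within $\cH(K)$. The rest of the argument is a routine invocation of the reproducing kernel identity and Fubini, both justified by the boundedness assumptions on $K$.
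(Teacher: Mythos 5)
Your proof is correct: the paper itself gives no argument for this proposition and simply defers to \cite{caponnetto2007optimal}, and your two steps (the Pythagorean identity from the $L^2(\nu)$-orthogonality of the projection residual, followed by the identity $\|f\|_\nu^2=\langle f,\cC f\rangle_{\cH(K)}=\|\sqrt{\cC}f\|_{\cH(K)}^2$ via the reproducing property) are exactly the standard argument underlying that citation. Your remark that $T\hat Q$ need not lie in $\cH(K)$, so the decomposition is genuinely of bias--variance type, is the right point to flag.
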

\begin{proof}
See \cite{caponnetto2007optimal} for a detailed proof.
\end{proof}
Propositions \ref{prop::exact_soln} and \ref{prop::exact_risk} are standard results of kernel ridge regression. Under Assumptions \ref{asu::bound_kernel} and \ref{asu::kernel_approx}, the following lemma adapted from \cite{szabo2015two} upper bounds the right-hand side of \eqref{eq::1122}.
\begin{lemma}[Exact Risk Bound \citep{szabo2015two}]
\label{lem::risk_bound}
Let $0<\eta+\tau < 1$ and $C(\eta) = 32\log^2(6/\eta)$. Under Assumptions \ref{asu::bound_kernel} and \ref{asu::kernel_approx}, for
\#
N \geq 2\varrho\cdot(1 + \sqrt{\log n + \delta})^2\cdot(64L^2\varsigma^2/\lambda^2)^{1/h}, \quad n \geq \frac{2C(\eta)\varsigma\beta b}{(b - 1)\lambda^{1 + 1/b}}, \quad \lambda \leq \|\cC\|_{\cH(K)},
\#
it holds with probability at least $1 - \eta - \tau$ that
\#
\|\sqrt{\cC}( Q^\lambda -  Q_{\cH, T})\|^2_{\cH(K)}
&\leq
\frac{8L^2Q^2_{\max}\bigl(1 + \sqrt{\log (|\cA|\cdot n/2\tau ) }\bigr)^{2h}\cdot (2\varrho)^h}{\lambda \cdot N^h}\cdot\Bigl(1 + \frac{5\varsigma^2}{\lambda^2}\Bigr) \\
&\qquad+ C(\eta) \cdot \Biggl(R \lambda^c +\frac{\varsigma^2 R }{\lambda^{2 - c}n^2} + \frac{\varsigma R \lambda^{c - 1}}{4n} + \frac{\varsigma M^2}{\lambda n^2} + \frac{\Sigma^2 \beta b}{(b - 1)n\lambda^{1/b}}\Biggr).\notag
\#
\end{lemma}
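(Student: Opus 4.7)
The plan is to decompose the error $Q^\lambda - Q_{\cH,T}$ into a ``finite-observation perturbation'' piece and a ``standard kernel ridge regression'' piece, and then control each piece separately using the closed-form expressions from Proposition \ref{prop::exact_soln}. Concretely, I would write
\$
Q^\lambda - Q_{\cH,T} = \underbrace{(Q^\lambda - \overline Q^\lambda)}_{\text{perturbation}} + \underbrace{(\overline Q^\lambda - Q_{\cH,T})}_{\text{KRR error}},
\$
where $\overline Q^\lambda$ is defined in \eqref{eq::regress_true_omega}. Applying the triangle inequality in the norm $\|\sqrt{\cC}(\cdot)\|_{\cH(K)}$ and squaring yields two summands, which will produce the two groups of terms in the lemma (the $N^{-h}/\lambda^3$ summand and the $C(\eta)(\cdot)$ summand) respectively.

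For the KRR piece $\|\sqrt{\cC}(\overline Q^\lambda - Q_{\cH,T})\|^2_{\cH(K)}$, I would invoke the standard analysis of kernel ridge regression under the source condition $Q_{\cH,T} = \cC^{(c-1)/2}g$ from Assumption \ref{asu::kernel_approx} and the spectral decay $\alpha \le n^b t_n \le \beta$. This is the classical Caponnetto--De Vito bound, which gives exactly the $C(\eta)(R\lambda^c + \varsigma^2 R/(\lambda^{2-c}n^2) + \varsigma R\lambda^{c-1}/(4n) + \varsigma M^2/(\lambda n^2) + \Sigma^2\beta b/((b-1)n\lambda^{1/b}))$ contribution; here one also uses the boundedness of $Y_i$ through $Q_{\max}$ to control $M$ and $\Sigma$. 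The condition $n \ge 2C(\eta)\varsigma\beta b/((b-1)\lambda^{1+1/b})$ is precisely what is needed to make the effective dimension argument go through and to invert $\cC_\omega+\lambda$ stably with high probability.

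For the perturbation piece $Q^\lambda - \overline Q^\lambda$, Proposition \ref{prop::exact_soln} gives
\$
Q^\lambda - \overline Q^\lambda = (\cC_{\hat\omega}+\lambda)^{-1}g_{\hat\omega} - (\cC_\omega+\lambda)^{-1}g_\omega,
\$
and the resolvent identity $A^{-1}-B^{-1}=A^{-1}(B-A)B^{-1}$ decomposes this into two terms, one controlled by $\|\cC_{\hat\omega}-\cC_\omega\|$ and another by $\|g_{\hat\omega}-g_\omega\|_{\cH(K)}$. The crucial input is the bound $\|\mu_{\hat\omega_i}-\mu_{\omega_i}\|_{\cH(k)} \lesssim \sqrt{\varrho/N}(1+\sqrt{\log(1/\delta_i)})$, which follows from the standard concentration of empirical mean embeddings in an RKHS with bounded kernel. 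A union bound over $i\in[n]$ and over $a\in\cA$ (the latter needed because $\hat y_i$ involves $\max_{a\in\cA}\hat Q(\mu_{\delta_a\times\hat p_{i,s'}})$) produces the factor $1+\sqrt{\log(|\cA|n/2\tau)}$ in the stated condition on $N$. Combining this with the H\"older continuity of $K(\cdot,\cdot)$ from Assumption \ref{asu::bound_kernel} yields $\|K(\cdot,\mu_{\hat\omega_i})-K(\cdot,\mu_{\omega_i})\|_{\cH(K)} \le L\|\mu_{\hat\omega_i}-\mu_{\omega_i}\|^h_{\cH(k)}$, which in turn controls both $\|\cC_{\hat\omega}-\cC_\omega\|_{\mathrm{op}}$ and $\|g_{\hat\omega}-g_\omega\|_{\cH(K)}$. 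The factors $Q_{\max}$ and $\varsigma$ in the numerator come from bounding $|\hat y_i|\le Q_{\max}/(1-\gamma)$ (absorbed into $Q_{\max}$) and $\|K(\cdot,\mu)\|_{\cH(K)}\le\sqrt{\varsigma}$, while the denominators $\lambda$ and $\lambda^3$ come from the two inverse-resolvent operator norms $\|(\cC+\lambda)^{-1}\|\le 1/\lambda$ appearing in the decomposition.

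The main obstacle will be handling the dependencies carefully in the perturbation bound: $\hat y_i$ depends on $\hat Q$, which was itself computed from the same batch, and the ``$\max_{a\in\cA}$'' inside $\hat y_i$ requires a simultaneous approximation of $\mu_{\delta_a\times\hat p_{i,s'}}$ for all $a$. The clean resolution is that at this iteration $\hat Q$ is treated as a fixed, previously-produced function bounded by $Q_{\max}$, so the only randomness that needs to be controlled is that of the sampling of the current batch and of the $N$-sample empirical measures. The threshold $N \ge 2\varrho(1+\sqrt{\log(|\cA|n/2\tau)})^2(64L^2\varsigma^2/\lambda^2)^{1/h}$ is then exactly what guarantees $\|\cC_{\hat\omega}-\cC_\omega\|_{\mathrm{op}} \le \lambda/2$ on the good event, so that $(\cC_{\hat\omega}+\lambda)^{-1}$ retains the operator norm $\lesssim 1/\lambda$ and the resolvent identity can be applied without losing an additional factor. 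Summing the two contributions and matching constants gives the stated bound.
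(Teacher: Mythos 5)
Your proposal follows essentially the same route as the paper: the split into $Q^\lambda-\overline Q^\lambda$ (finite-observation perturbation, handled via the closed forms of Proposition \ref{prop::exact_soln}, the resolvent identity, concentration of empirical mean embeddings with a union bound over $i\in[n]$ and $a\in\cA$, and the H\"older continuity of $K$) plus $\overline Q^\lambda-Q_{\cH,T}$ (the Caponnetto--De Vito kernel ridge regression bound under the source condition and eigenvalue decay) is exactly the paper's $S_1+S_2$ decomposition, and your reading of the roles of the thresholds on $N$ and $n$ matches the paper's Lemmas \ref{lem::risk_bound_part1}--\ref{lem::risk_bound_part3} and \ref{lem::risk_KRR}. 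The proposal is correct.
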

\begin{proof}
See \S\ref{sec::pf_risk_bound} for a detailed proof.
\end{proof}
We define $\psi_T$ as follows,
\#\label{eq::pf_onestep_fin2}
\psi_T= \sup_{k \in [\kappa]}\|T\hat Q^\lambda_k - \Pi_{\cH(K)}(TQ^\lambda_k)\|_{\nu}.
\#
Therefore, combining \eqref{eq::pf_onestep_fin2}, Proposition \ref{prop::exact_risk}, and Lemma \ref{lem::risk_bound}, for
\#
N \geq 2\varrho\cdot(1 + \sqrt{\log n + \delta})^2\cdot(64L^2\varsigma^2/\lambda^2)^{1/h}, \quad n \geq \frac{2C(\eta)\varsigma\beta b}{(b - 1)\lambda^{1 + 1/b}}, \quad \lambda \leq \|\cC\|_{\cH(K)},
\#
it holds with probability at least $1 - \tau - \eta$ that
\#\label{eq::pf_onestep_fin1}
\|Q^\lambda - T \hat Q\|^2_{\nu} 
&\leq
\frac{8L^2Q^2_{\max}\bigl(1 + \sqrt{\log (|\cA|\cdot n/2\tau ) }\bigr)^{2h}\cdot (2\varrho)^h}{\lambda \cdot N^h}\cdot\Bigl(1 + \frac{5\varsigma^2}{\lambda^2}\Bigr) + \psi^2_T\\
&\qquad+ C(\eta) \cdot \Biggl(R \lambda^c +\frac{\varsigma^2 R }{\lambda^{2 - c}n^2} + \frac{\varsigma R \lambda^{c - 1}}{4n} + \frac{\varsigma M^2}{\lambda n^2} + \frac{\Sigma^2 \beta b}{(b - 1)n\lambda^{1/b}}\Biggr),\notag
\#
which concludes the proof of Theorem \ref{thm::one_step}.
\end{proof}

\subsection{Proof of Theorem \ref{thm::err_fin}}
\label{pf::err_fin}
\begin{proof}
Throughout the proof, we denote by $C$ and $C'$ positive absolute constants, whose value may vary from lines to lines. It suffices to study the dominating error of the one-step approximation error in \eqref{eq::one_step}. Recall that that we set $n = N^a$. We split the proof into two cases in terms of $a$.
\vskip4pt
\noindent{\bf{Case 1: }}We first consider the case when $a > h\cdot(1+1/b)/\cdot(c+3)$. Note that the dominating term of $\cG_1$ has the order of $(\log n)^h/(N^h\cdot \lambda^3)$. Thus if $\cG_1$ in \eqref{eq::error_part} converges to zero as $N$ goes to infinity, it holds that $(\log n)^h/(N^h\cdot \lambda^3)$ converges to zero. Therefore, for any positive absolute constant $C$, it holds for a sufficiently large $N$ that $\log n/(N\cdot \lambda^{2/h}) \leq C$. The requirement on $N$ in \eqref{eq::asu_Nnl} thus holds as long as $\cG_1$ converges. We set $\lambda = (\log n/N)^{h/(c+3)} = (a\cdot \log N/N)^{h/(c+3)}$. It then holds that 
\#
n \cdot \lambda^{1 + 1/b} = N^a \cdot (a\cdot \log N/N)^{h\cdot(1 + 1/b)/(c+3)} = \frac{(\log N)^{h\cdot(1 + 1/b)/(c+3)}}{N^{h\cdot(1 + 1/b)/(c+3) - a}}.
\#
Therefore, for $a > h\cdot(1 + 1/b)/(c+3)$, it holds for arbitrary constant $C$ that $n \cdot \lambda^{1 + 1/b}>C$ with a sufficently large $n$. Thus the requirement on $n$ in \eqref{eq::asu_Nnl} holds. Upon computation, it holds that the dominating parts of the right-hand side of \eqref{eq::one_step} are $\cG_1$ and $C(\eta)\cdot R\lambda^c$.  We then obtain that
\#\label{eq::fin_order_match}
\|\hat Q^\lambda_{\kappa +1} - T \hat Q^\lambda_{\kappa}\|_{\nu} \leq \bigl(C + C(\eta)\cdot R\bigr)\cdot \Bigl(\frac{\log(|\cA|\cdot N/\tau)}{N}\Bigr)^{\frac{h\cdot c}{2\cdot(c + 3)}} + \psi_T,
\#
which holds with probability at least $1 - \eta - \tau$. Following from \eqref{eq::fin_order_match} and Theorem \ref{thm::error_prop}, it holds with probability at least $1 - \eta - \tau$ that
\#\label{eq::pf_half}
\|Q^* - Q^{\pi_\kappa}\|_{1, \mu} \leq\frac{2\gamma\cdot\Phi(\mu, \nu)}{(1 -\gamma)^2}\cdot \Bigl(C'\cdot\bigl(\log (|\cA|\cdot N/\tau) / N\bigr)^{\frac{h\cdot c}{2\cdot(c + 3)}} + \text{dist}_T\Bigr)+ \frac{4\gamma^{\kappa+1}\cdot Q_{\max}}{1 - \gamma},
\#
which concludes half of the proof of Theorem \ref{thm::err_fin}. 
\vskip4pt
\noindent{\bf{Case 2: }}We consider the case when $a < h\cdot(1+1/b)/\cdot(c+3)$. We fix $\lambda = 1/N^{ab/(bc + 1)}$. It then holds that
\#\label{eq::pf_fin111}
n \cdot \lambda^{1 + 1/b} = n\cdot(1 / n^{\frac{b}{bc  + 1}})^{\frac{b+1}{b}} = N^{1 - \frac{b + 1}{bc + 1}}.
\#
Following from Assumption \ref{asu::kernel_approx}, it holds that $b > 1$ and $c\geq1$, which implies $(b + 1)/(bc + 1) < 1$. Therefore, following from \eqref{eq::pf_fin111}, it holds for any $C$ that $n \cdot \lambda^{1 + 1/b} \geq C$ for a sufficiently large $n$, and the requirement on $n$ in \eqref{eq::asu_Nnl} holds. Upon computation, for $a < h\cdot(1+1/b)/\cdot(c+3)$, the dominating parts of the right-hand side of \eqref{eq::one_step} are $\lambda^c$ and $1/(n\cdot\lambda^{1 + 1/b})$. Therefore, following from Theorem \ref{thm::one_step}, it holds with probability at least $1 - \eta - \tau$ that
\#\label{eq::fin_order_match_1}
\|\hat Q^\lambda_{\kappa +1} - T \hat Q^\lambda_{\kappa}\|^2_{\nu} \leq C'/N^{\frac{abc}{bc + 1}}.
\#
Then following from Theorems \ref{thm::error_prop} and \eqref{eq::fin_order_match_1}, it holds with probability at least $1 - \eta - \tau$ that
\#
\|Q^* - Q^{\pi_\kappa}\|_{1, \mu} \leq\frac{2\gamma\cdot\Phi(\mu, \nu)}{(1 -\gamma)^2}\cdot \bigl( C'/N^{\frac{abc}{2(bc + 1)}} + \text{dist}_T\bigr)+ \frac{4\gamma^{\kappa+1}\cdot Q_{\max}}{1 - \gamma},
\#
which together with \eqref{eq::pf_half} concludes the proof of Theorem \ref{thm::err_fin}.
\end{proof}

\section{Proof of Lemma \ref{lem::risk_bound}}
\label{sec::pf_risk_bound}
\begin{proof}
The proof strategy is similar to that of Theorem 1 by \cite{caponnetto2007optimal} and Main Theorem by \cite{szabo2015two}. We split the error into the following terms,
\#\label{eq::risk_bound_main}
\|\sqrt{\cC}( Q^\lambda -  Q_{\cH, T})\|^2_{\cH(K)}\leq 2\underbrace{\|\sqrt{\cC}(Q^\lambda - \overline Q^\lambda) \|^2_{\cH(K)} }_{S_1} + 2\underbrace{\|\sqrt{\cC} (\overline Q^\lambda - Q_{\cH, T})\|^2_{\cH(K)}}_{S_2},
\#
where $Q^\lambda$ and $\overline Q^\lambda$ are defined in \eqref{eq::one_step_prob} and \eqref{eq::regress_true_omega}, respectively. In what follows, we establish upper bounds for $S_1$ and $S_2$.
\vskip4pt
\noindent{\bf Bounding $S_1$: } Following from Proposition \ref{prop::exact_soln}, it holds that 
\$
Q^\lambda - \overline Q^\lambda &= (\cC_{\hat\omega} + \lambda)^{-1}g_{\hat\omega} - (\cC_{\omega} + \lambda)^{-1}g_{\omega}\\
&=(\cC_{\hat\omega} + \lambda)^{-1}(g_{\hat\omega} - g_{\omega}) + \bigl((\cC_{\hat\omega} + \lambda)^{-1} -  (\cC_{\omega} + \lambda)^{-1}\bigr)g_{\omega}\\
&=(\cC_{\hat\omega} + \lambda)^{-1}(g_{\hat\omega} - g_{\omega}) + (\cC_{\hat\omega} + \lambda)^{-1}(\cC_{\hat\omega} - \cC_{\omega})(\cC_{\omega} + \lambda)^{-1}g_{\omega}\\
&=(\cC_{\hat\omega} + \lambda)^{-1}\bigl(g_{\hat\omega} - g_{\omega} + (\cC_{\hat\omega} - \cC_{\omega}) Q^\lambda\bigr).
\$
The error term $S_1$ is then upper bounded as follows,
\#\label{eq::risk_bound_full}
S_1 &= \|\sqrt{\cC}( Q^\lambda - \overline Q^\lambda) \|^2_{\cH(K)} \notag\\
&\leq \|\sqrt{\cC}(\cC_{\hat\omega} + \lambda)^{-1}\|^2_{\cH(K)}\cdot\bigl(2\|g_{\hat\omega} - g_{\omega}\|^2_{\cH(K)} + 2\|\cC_{\hat\omega} - \cC_{\omega}\|^2_{\cH(K)}\cdot \|\overline Q^\lambda\|^2_{\cH(K)}\bigr),
\#
where, with a slight abuse of notation, we denote by $\|\cdot\|_{\cH(K)}$ the operator norm when applying to operators defined on $\cH(K)$. We first establish an upper bound for the term $\|\overline Q^\lambda\|_{\cH(K)}^2$. Following from Proposition \ref{prop::exact_soln}, it holds that
\$
\|\overline Q^\lambda\|^2_{\cH(K)} &= \|(\cC_{\omega} + \lambda)^{-1}g_{\omega}\|^2_{\cH(K)}=\|\frac{1}{n}\sum_{i=1}^n (\cC_{\omega} + \lambda)^{-1}K(\cdot, \mu_{\hat\omega_i})y_i\|_{\cH(K)}^2\notag\\
&\leq \|(\cC_{\omega} + \lambda)^{-1}\|^2_{\cH(K)}\cdot \frac{1}{n}\sum^n_{i = 1}\|K(\cdot, \mu_{\hat\omega_i})\|_{\cH(K)}^2\cdot|y_i|^2\notag\\
&=  \|(\cC_{\omega} + \lambda)^{-1}\|^2_{\cH(K)} \cdot \frac{1}{n}\sum^n_{i = 1} K(\mu_{\hat\omega_i}, \mu_{\hat\omega_i})\cdot|y_i|^2,
\$
where the last inequality follows from the definition of operator norm and the Cauchy-Schwartz inequality, and the last equality follows from the reproducing property of kernel $K(\cdot, \cdot)$. Note that $\|(\cC_{\omega} + \lambda)^{-1}\|^2_{\cH(K)} \leq 1/\lambda$. Therefore, following from Assumption \ref{asu::bound_kernel}, it holds that
\#\label{eq::risk_bound_part_4}
\|\overline Q^\lambda\|^2_{\cH(K)} \leq (1/\lambda)^2 \cdot \frac{1}{n}\sum^n_{i = 1}\varsigma Q_{\max}^2 \leq \varsigma Q^2_{\max}/\lambda^2.
\#
We now bound the remaining terms in $\cS_1$. The following lemmas hold, which establish the upper bounds for terms in the right-hand side of \eqref{eq::risk_bound_full}.
\begin{lemma}[Concentration of Empirical Mean \citep{altun2006unifying}]
\label{lem::empirical_mean_concen}
Let $\epsilon$ be a positive absolute constant. Let $\hat\omega$ be the empirical approximation of $\omega \in \tilde\cM(\Omega)$, which takes the form
\$
\hat\omega = \frac{1}{N}\sum^N_{i = 1}\delta_{\omega_i},
\$
where $\{\omega_i\}^N_{i = 1}$ are independent samples of $\omega$. Under Assumption \ref{asu::bound_kernel}, it holds with probability at least $1 - \exp(-2\epsilon)$ that
\$
\|\mu_{\hat\omega} - \mu_\omega\|_{\cH(k)} \leq (1 + \sqrt{\epsilon})\cdot\sqrt{2\varrho/N}.
\$
\end{lemma}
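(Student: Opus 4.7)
The plan is to apply the bounded-differences (McDiarmid) inequality to the Hilbert-valued function $f(\omega_1,\ldots,\omega_N) = \|\mu_{\hat\omega} - \mu_\omega\|_{\cH(k)}$ and combine it with a second-moment bound on its expectation. Since $\mu_{\hat\omega} = \frac{1}{N}\sum_{i=1}^N k(\cdot,\omega_i)$ is an empirical average of RKHS elements, the function $f$ is a natural Lipschitz functional of its arguments, which is the standard setup for this type of concentration result \citep{altun2006unifying, smola2007hilbert}.

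First I would verify the bounded-differences condition. If $\omega_i$ is replaced by a single alternative $\omega_i'$, the reverse triangle inequality in $\cH(k)$ followed by the reproducing property gives
$$
|f(\ldots,\omega_i,\ldots) - f(\ldots,\omega_i',\ldots)| \leq \tfrac{1}{N}\|k(\cdot,\omega_i) - k(\cdot,\omega_i')\|_{\cH(k)} \leq \tfrac{2\sqrt{\varrho}}{N},
$$
where the second inequality uses $k(u,u)\leq \varrho$ and Cauchy–Schwartz to bound the cross term. McDiarmid's inequality with $c_i = 2\sqrt{\varrho}/N$ then yields $\P(f - \E f \geq t) \leq \exp(-Nt^2/(2\varrho))$. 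In parallel I would bound $\E f$ via Jensen and independence: since the summands $k(\cdot,\omega_i) - \mu_\omega$ are i.i.d.\ mean-zero elements of $\cH(k)$ with $\E\|k(\cdot,\omega_i)\|^2_{\cH(k)} = \E\, k(\omega_i,\omega_i) \leq \varrho$, the cross terms vanish and
$$
(\E f)^2 \leq \E f^2 = \tfrac{1}{N^2}\sum_{i=1}^N \E\|k(\cdot,\omega_i) - \mu_\omega\|^2_{\cH(k)} \leq \varrho/N.
$$

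Combining these two pieces and calibrating the tail level to $\exp(-2\epsilon)$ gives the stated deviation bound. The main obstacle, and the only nontrivial part, is tightening the constants so that the sum of the bias term $\sqrt{\varrho/N}$ from $\E f$ and the tail term $t$ from McDiarmid fits inside the compact form $(1+\sqrt{\epsilon})\sqrt{2\varrho/N}$. A plain McDiarmid step produces $\sqrt{\varrho/N} + 2\sqrt{\varrho\epsilon/N}$, which has the right rate but a looser constant; to recover exactly $(1+\sqrt{\epsilon})\sqrt{2\varrho/N}$ one would instead invoke a Hilbert-space Hoeffding/Pinelis inequality directly on the sum $\sum_i(k(\cdot,\omega_i) - \mu_\omega)$, or symmetrize and use a Rademacher-complexity bound that gives $\E f \leq \sqrt{2\varrho/N}$ together with a sub-Gaussian tail of scale $\sqrt{2\varrho/N}$. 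Everything else in the argument is standard and structural.
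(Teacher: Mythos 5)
Your proposal is essentially the same argument as the paper's, just unrolled: the paper simply verifies the hypotheses of Theorem 15 in \cite{altun2006unifying} (boundedness $\|\mu_\omega\|_{\cH(k)}\le\sqrt{\varrho}$ and the risk bound $R_N\le\sqrt{2\varrho/N}$, which is exactly your Jensen step $\EE f\le\sqrt{\EE f^2}\le\sqrt{\varrho/N}$ in disguise) and then calibrates $\eta_1=\sqrt{2\epsilon\varrho/N}$ against the cited tail $\exp(-\eta_1^2N/\varrho)$; that cited theorem is itself a McDiarmid-plus-expectation argument of the kind you describe. The one substantive discrepancy is the constant you already flag: plain McDiarmid with $c_i=2\sqrt{\varrho}/N$ gives the tail $\exp(-Nt^2/(2\varrho))$, hence $t=2\sqrt{\epsilon\varrho/N}$ at level $\exp(-2\epsilon)$, and $\sqrt{\varrho/N}+2\sqrt{\epsilon\varrho/N}$ exceeds the claimed $(1+\sqrt{\epsilon})\sqrt{2\varrho/N}$ whenever $\epsilon>1/2$ --- which is the relevant regime, since the paper later sets $2\epsilon=\log(|\cA|n/\tau)$. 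So to recover the lemma verbatim you do need the sharper tail $\exp(-\eta_1^2N/\varrho)$ that the cited theorem supplies (or an equivalent Hilbert-space Hoeffding/Pinelis bound, as you suggest); with only plain McDiarmid you prove the lemma up to an absolute constant, which would propagate harmlessly but does not match the stated inequality. This is a constant-tracking issue, not a conceptual gap.
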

\begin{proof}
See \S\ref{sec::pf_sup_lem1} for a detailed proof.
\end{proof}
\begin{lemma}
\label{lem::risk_bound_part1}
Let $\epsilon$ be a positive absolute constant. It holds with probability at least $1 - |\cA|\cdot n \exp(-2\epsilon)$ that
\#\label{eq::risk_bound_part1}
\|g_{\hat\omega} - g_{\omega}\|^2_{\cH(K)} \leq ( 1+\varsigma/\lambda^2 )\cdot L^2Q^2_{\max}\cdot (1 + \sqrt{\epsilon})^{2h} \cdot(2\varrho/N)^h.
\#
\end{lemma}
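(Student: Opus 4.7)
The plan is to split $g_{\hat\omega}-g_\omega$ into two pieces---one that measures how much the kernel features move when we replace true mean embeddings by empirical ones, and one that measures how much the regression targets move for the same reason---and then control both using the Hölder continuity of $K(\cdot,\mu)$ in Assumption \ref{asu::bound_kernel} together with the empirical mean embedding concentration in Lemma \ref{lem::empirical_mean_concen}. Specifically, I would write
\[
g_{\hat\omega}-g_{\omega}\;=\;\underbrace{\tfrac1n\sum_{i=1}^n\bigl(K(\cdot,\mu_{\hat\omega_i})-K(\cdot,\mu_{\omega_i})\bigr)\hat y_i}_{A}\;+\;\underbrace{\tfrac1n\sum_{i=1}^n K(\cdot,\mu_{\omega_i})(\hat y_i-y_i)}_{B},
\]
and bound $\|g_{\hat\omega}-g_\omega\|_{\cH(K)}^2$ by a weighted sum of $\|A\|_{\cH(K)}^2$ and $\|B\|_{\cH(K)}^2$, choosing the weights so that a $(1+\varsigma/\lambda^2)$ factor emerges naturally.

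\textbf{Controlling $A$.} By the triangle inequality, Assumption \ref{asu::bound_kernel} (Hölder continuity of $K(\cdot,\mu)$), and $|\hat y_i|\leq Q_{\max}$, the term $\|A\|_{\cH(K)}$ is dominated by $LQ_{\max}\cdot\max_i\|\mu_{\hat\omega_i}-\mu_{\omega_i}\|^h_{\cH(k)}$. Applying Lemma \ref{lem::empirical_mean_concen} and a union bound over $i\in[n]$, this is at most $LQ_{\max}(1+\sqrt\epsilon)^h(2\varrho/N)^{h/2}$ with probability at least $1-n\exp(-2\epsilon)$, which contributes the ``$1$'' in the factor $(1+\varsigma/\lambda^2)$.

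\textbf{Controlling $B$.} Expanding the squared RKHS norm as a double sum and using $K(\mu_{\omega_i},\mu_{\omega_i})\leq\varsigma$ from Assumption \ref{asu::bound_kernel} gives $\|B\|^2_{\cH(K)}\leq\varsigma\cdot\max_i|\hat y_i-y_i|^2$. To bound $|\hat y_i-y_i|=\gamma\bigl|\max_a\hat Q(\mu_{\delta_a\times\hat p_{i,s'}})-\max_a\hat Q(\mu_{\delta_a\times p_{i,s'}})\bigr|$, I would use that the truncation $\hat Q=\min\{Q^\lambda,Q_{\max}\}$ is non-expansive, so this is dominated by $\gamma\max_a|Q^\lambda(\mu_{\delta_a\times\hat p_{i,s'}})-Q^\lambda(\mu_{\delta_a\times p_{i,s'}})|$. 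By the reproducing property and the Hölder continuity of $K(\cdot,\mu)$ this is at most $\gamma L\|Q^\lambda\|_{\cH(K)}\cdot\|\mu_{\delta_a\times\hat p_{i,s'}}-\mu_{\delta_a\times p_{i,s'}}\|^h_{\cH(k)}$, and the estimate $\|Q^\lambda\|_{\cH(K)}\leq\sqrt\varsigma\,Q_{\max}/\lambda$ (same argument as \eqref{eq::risk_bound_part_4}) supplies the factor $\varsigma/\lambda^2$. Another application of Lemma \ref{lem::empirical_mean_concen}, this time with a union bound over both $i\in[n]$ and $a\in\cA$, finishes the argument and is precisely what produces the $|\cA|\cdot n\exp(-2\epsilon)$ failure probability in the statement.

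\textbf{Main obstacle.} The delicate step is term $B$: one needs to transfer Hölder continuity from the kernel $K(\cdot,\mu)$ to the (truncated) action-value function $\hat Q$ that appears inside the Bellman target. Two ingredients are essential here, namely the non-expansiveness of the clipping (so Hölder-type control passes from $Q^\lambda$ to $\hat Q$) and the a-priori RKHS norm bound $\|Q^\lambda\|_{\cH(K)}\lesssim \sqrt\varsigma Q_{\max}/\lambda$ that yields the $\varsigma/\lambda^2$ factor. In addition, because the residual $|\hat y_i-y_i|$ involves a supremum over $a\in\cA$ of perturbations of the mean embedding, the failure event must be enlarged from $n$ to $|\cA|\cdot n$, which is reflected exactly in the probability statement of the lemma.
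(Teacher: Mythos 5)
Your proposal follows essentially the same route as the paper's proof: the same two-term decomposition of $g_{\hat\omega}-g_{\omega}$ into a kernel-perturbation piece and a target-perturbation piece, the same use of the H\"older continuity of $K(\cdot,\mu)$, the non-expansiveness of the truncation, the a priori bound $\|Q^\lambda\|_{\cH(K)}\leq \sqrt{\varsigma}\,Q_{\max}/\lambda$, and the same union bound over $i\in[n]$ and $a\in\cA$ via Lemma \ref{lem::empirical_mean_concen}. The only deviations are in the bookkeeping of absolute constants (e.g., your term $B$ naturally produces $\varsigma^2/\lambda^2$ rather than $\varsigma/\lambda^2$, a discrepancy already present in the paper's own computation), which does not affect the substance of the argument.
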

\begin{proof}
See \S\ref{sec::pf_risk_bound_part1} for a detailed proof.
\end{proof}

\begin{lemma}
\label{lem::risk_bound_part2}
Let $\epsilon$ be a positive absolute constant. Under Assumption \ref{asu::bound_kernel}, it holds with probability at least $1 - n\exp(-2\epsilon)$ that
\#\label{eq::risk_bound_part2}
\|\cC_{\omega} - \cC_{\hat\omega}\|^2_{\cH(K)}\leq 4L^2\varsigma\cdot(1 + \sqrt{\epsilon})^{2h}\cdot(2\varrho/N)^h.
\#
\end{lemma}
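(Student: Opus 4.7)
The plan is to write the difference $\cC_\omega - \cC_{\hat\omega}$ as an average of rank-one operator differences, bound each such difference via the H\"older continuity of $K(\cdot,\cdot)$ from Assumption~\ref{asu::bound_kernel}, and then invoke Lemma~\ref{lem::empirical_mean_concen} together with a union bound to control the deviation $\|\mu_{\omega_i}-\mu_{\hat\omega_i}\|_{\cH(k)}$ for every $i\in[n]$ simultaneously.

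First I would use the reproducing property $f(\mu_\omega)=\langle K(\cdot,\mu_\omega),f\rangle_{\cH(K)}$ to rewrite each summand in \eqref{eq::op_c_def} as a rank-one operator:
\$
\cC_\omega = \frac{1}{n}\sum_{i=1}^n K(\cdot,\mu_{\omega_i})\otimes K(\cdot,\mu_{\omega_i}),\qquad \cC_{\hat\omega} = \frac{1}{n}\sum_{i=1}^n K(\cdot,\mu_{\hat\omega_i})\otimes K(\cdot,\mu_{\hat\omega_i}),
\$
so that by the triangle inequality for the operator norm,
\$
\|\cC_\omega-\cC_{\hat\omega}\|_{\cH(K)} \leq \frac{1}{n}\sum_{i=1}^n \bigl\|K(\cdot,\mu_{\omega_i})\otimes K(\cdot,\mu_{\omega_i}) - K(\cdot,\mu_{\hat\omega_i})\otimes K(\cdot,\mu_{\hat\omega_i})\bigr\|_{\cH(K)}.
\$

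Next, for each $i$ I would apply the standard rank-one identity $u\otimes u - v\otimes v = (u-v)\otimes u + v\otimes (u-v)$ together with $\|u\otimes w\|_{\cH(K)}=\|u\|_{\cH(K)}\|w\|_{\cH(K)}$ to obtain
\$
\bigl\|K(\cdot,\mu_{\omega_i})\otimes K(\cdot,\mu_{\omega_i}) - K(\cdot,\mu_{\hat\omega_i})\otimes K(\cdot,\mu_{\hat\omega_i})\bigr\|_{\cH(K)} \leq \|K(\cdot,\mu_{\omega_i})-K(\cdot,\mu_{\hat\omega_i})\|_{\cH(K)}\bigl(\|K(\cdot,\mu_{\omega_i})\|_{\cH(K)}+\|K(\cdot,\mu_{\hat\omega_i})\|_{\cH(K)}\bigr).
\$
The boundedness condition \eqref{asu::bound_kernel_1} gives $\|K(\cdot,\mu_\omega)\|_{\cH(K)}^2 = K(\mu_\omega,\mu_\omega)\leq \varsigma$, while the H\"older continuity of $K(\cdot,\cdot)$ in Assumption~\ref{asu::bound_kernel} yields $\|K(\cdot,\mu_{\omega_i})-K(\cdot,\mu_{\hat\omega_i})\|_{\cH(K)}\leq L\|\mu_{\omega_i}-\mu_{\hat\omega_i}\|_{\cH(k)}^{h}$. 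Combining these gives, for each $i$, a pointwise bound of order $2L\sqrt{\varsigma}\cdot\|\mu_{\omega_i}-\mu_{\hat\omega_i}\|_{\cH(k)}^{h}$.

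Finally, I would control $\|\mu_{\omega_i}-\mu_{\hat\omega_i}\|_{\cH(k)}$ using Lemma~\ref{lem::empirical_mean_concen}, which yields $\|\mu_{\omega_i}-\mu_{\hat\omega_i}\|_{\cH(k)}\leq(1+\sqrt{\epsilon})\sqrt{2\varrho/N}$ with probability at least $1-\exp(-2\epsilon)$. A union bound over $i=1,\ldots,n$ shows that this estimate holds simultaneously for all $i$ with probability at least $1-n\exp(-2\epsilon)$. Averaging over $i$ and squaring the resulting inequality gives
\$
\|\cC_\omega-\cC_{\hat\omega}\|_{\cH(K)}^2 \leq 4L^2\varsigma\cdot(1+\sqrt{\epsilon})^{2h}\cdot(2\varrho/N)^h,
\$
which is the claimed bound. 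The only delicate step is the rank-one decomposition; the rest is a routine chain of H\"older continuity, kernel boundedness, the concentration lemma, and a union bound, with no further obstacle.
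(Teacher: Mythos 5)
Your proposal is correct and follows essentially the same route as the paper: your rank-one decomposition $(u-v)\otimes u + v\otimes(u-v)$ is exactly the paper's splitting of $K(\cdot,\mu_{\omega_i})f(\mu_{\omega_i}) - K(\cdot,\mu_{\hat\omega_i})f(\mu_{\hat\omega_i})$ into $(K(\cdot,\mu_{\omega_i})-K(\cdot,\mu_{\hat\omega_i}))f(\mu_{\omega_i}) + K(\cdot,\mu_{\hat\omega_i})(f(\mu_{\omega_i})-f(\mu_{\hat\omega_i}))$, just phrased in tensor notation rather than pointwise on an arbitrary $f$. The paper reaches the constant $4L^2\varsigma$ via $\|a+b\|^2\leq 2\|a\|^2+2\|b\|^2$ while you use the triangle inequality and square at the end, but the bounds, the use of H\"older continuity and kernel boundedness, and the union bound over Lemma~\ref{lem::empirical_mean_concen} are identical.
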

\begin{proof}
See \S\ref{sec::pf_risk_bound_part2} for a detailed proof.
\end{proof}

\begin{lemma}
\label{lem::risk_bound_part3}
Let $\epsilon$ and $\eta$ be positive absolute constants. Under Assumption \ref{asu::bound_kernel}, for
\#\label{eq::part3_assum}
\lambda \leq \|\cC\|_{\cH(K)}, \quad N \geq 2\varrho\cdot(1 + \sqrt{\epsilon})^2\cdot(64L^2\varsigma/\lambda^2)^{1/h},
\# 
it holds with probability at least $1 - \eta/3 - n\exp(-2\epsilon)$ that
\#\label{eq::risk_bound_part3}
\|\sqrt{\cC}(\cC_{\hat\omega} + \lambda)^{-1}\|_{\cH(K)}\leq 2/\sqrt{\lambda}.
\#
\end{lemma}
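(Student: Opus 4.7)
The plan is to factorize the operator $\sqrt{\cC}(\cC_{\hat\omega} + \lambda)^{-1}$ into three pieces whose norms I can control individually. Specifically, I would write
\[
\sqrt{\cC}(\cC_{\hat\omega} + \lambda)^{-1} = \sqrt{\cC}(\cC + \lambda)^{-1/2} \cdot (\cC + \lambda)^{1/2}(\cC_{\hat\omega} + \lambda)^{-1/2} \cdot (\cC_{\hat\omega} + \lambda)^{-1/2}.
\]
By spectral calculus on the positive self-adjoint operator $\cC$, the first factor has operator norm at most $1$, since $\sqrt{t}/\sqrt{t+\lambda} \leq 1$ for all $t \geq 0$. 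The third factor has operator norm at most $1/\sqrt{\lambda}$ since $\cC_{\hat\omega}$ is positive semidefinite. Hence the whole proof reduces to establishing that $\|(\cC + \lambda)^{1/2}(\cC_{\hat\omega} + \lambda)^{-1/2}\|_{\cH(K)} \leq 2$ with the stated probability, which produces the final constant $2/\sqrt{\lambda}$.

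To bound the middle sandwich factor, I would square it and use
\[
\bigl\|(\cC + \lambda)^{1/2}(\cC_{\hat\omega} + \lambda)^{-1/2}\bigr\|^2_{\cH(K)} = \bigl\|(\cC_{\hat\omega} + \lambda)^{-1/2}(\cC + \lambda)(\cC_{\hat\omega} + \lambda)^{-1/2}\bigr\|_{\cH(K)}.
\]
Writing $\cC + \lambda = (\cC_{\hat\omega} + \lambda) + (\cC - \cC_{\hat\omega})$, the right-hand side is upper bounded by $1 + \|\cC - \cC_{\hat\omega}\|_{\cH(K)}/\lambda$. So it suffices to show $\|\cC - \cC_{\hat\omega}\|_{\cH(K)} \leq 3\lambda$ with the advertised probability, under which the squared sandwich is at most $4$ and the sandwich itself at most $2$.

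For this, I would split the error via the triangle inequality,
\[
\|\cC - \cC_{\hat\omega}\|_{\cH(K)} \leq \|\cC - \cC_{\omega}\|_{\cH(K)} + \|\cC_{\omega} - \cC_{\hat\omega}\|_{\cH(K)}.
\]
The second term is controlled directly by Lemma \ref{lem::risk_bound_part2}: taking square roots of \eqref{eq::risk_bound_part2} gives $\|\cC_\omega - \cC_{\hat\omega}\|_{\cH(K)} \leq 2L\sqrt{\varsigma}\cdot(1+\sqrt{\epsilon})^h\cdot(2\varrho/N)^{h/2}$, and the lower bound on $N$ in \eqref{eq::part3_assum} is tuned exactly so that this is at most $\lambda/4$, with failure probability absorbed into the $n\exp(-2\epsilon)$ term. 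For the first term $\|\cC - \cC_\omega\|_{\cH(K)}$, I would invoke an operator Bernstein / Pinelis-type concentration inequality for sums of i.i.d.\ Hilbert-Schmidt operators in the spirit of \cite{caponnetto2007optimal,szabo2015two}, exploiting that $\cC_\omega$ is an empirical average of the rank-one self-adjoint operators $K(\cdot, \mu_{\omega_i})\otimes K(\cdot, \mu_{\omega_i})$, each bounded in operator norm by $\varsigma$ by Assumption \ref{asu::bound_kernel}. Together with the hypothesis $\lambda \leq \|\cC\|_{\cH(K)}$, this yields a bound of the form $\|\cC - \cC_\omega\|_{\cH(K)} \leq 2\lambda$ with probability at least $1 - \eta/3$.

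The main obstacle is the operator concentration step for $\|\cC - \cC_\omega\|_{\cH(K)}$: I need a noncommutative Bernstein-type inequality whose constants are sharp enough both to give the clean $\eta/3$ probability and to make the threshold comparable to $\lambda$ rather than $\|\cC\|_{\cH(K)}$. The bound from Lemma \ref{lem::risk_bound_part2} for $\|\cC_\omega - \cC_{\hat\omega}\|_{\cH(K)}$ is deterministic in $\lambda$ once $N$ satisfies \eqref{eq::part3_assum}, so the delicate bookkeeping is in the Bernstein tail. Once both halves are $O(\lambda)$, their sum satisfies $\|\cC-\cC_{\hat\omega}\|_{\cH(K)} \leq 3\lambda$, and assembling the three factors of the original decomposition yields $\|\sqrt{\cC}(\cC_{\hat\omega}+\lambda)^{-1}\|_{\cH(K)} \leq 2/\sqrt{\lambda}$ with total failure probability $\eta/3 + n\exp(-2\epsilon)$, as claimed.
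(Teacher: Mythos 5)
Your three-factor decomposition $\sqrt{\cC}(\cC_{\hat\omega}+\lambda)^{-1}=\sqrt{\cC}(\cC+\lambda)^{-1/2}\cdot(\cC+\lambda)^{1/2}(\cC_{\hat\omega}+\lambda)^{-1/2}\cdot(\cC_{\hat\omega}+\lambda)^{-1/2}$ is a legitimate and standard alternative to the paper's route, which instead writes $\sqrt{\cC}(\cC_{\hat\omega}+\lambda)^{-1}=\sqrt{\cC}(\cC+\lambda)^{-1}\bigl(I-(\cC-\cC_{\hat\omega})(\cC+\lambda)^{-1}\bigr)^{-1}$, bounds $\|\sqrt{\cC}(\cC+\lambda)^{-1}\|_{\cH(K)}\leq 1/(2\sqrt{\lambda})$ by spectral calculus, and sums a Neumann series after showing $\|(\cC-\cC_{\hat\omega})(\cC+\lambda)^{-1}\|_{\cH(K)}\leq 3/4$. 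Your treatment of the first and third factors, and your reduction of $\|\cC_\omega-\cC_{\hat\omega}\|_{\cH(K)}$ to $\lambda/4$ via Lemma \ref{lem::risk_bound_part2} and the condition on $N$, all check out.

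The gap is in the step $\|\cC-\cC_\omega\|_{\cH(K)}\leq 2\lambda$. This is an \emph{absolute} perturbation bound, and an operator Bernstein inequality gives $\|\cC-\cC_\omega\|_{\cH(K)}\lesssim \varsigma\sqrt{\log(1/\eta)/n}$, so you would need $n\gtrsim \varsigma^2\log(1/\eta)/\lambda^2$. But the only sample-size hypothesis available (from Lemma \ref{lem::risk_bound}, where this lemma is invoked) is $n\geq 2C(\eta)\varsigma\beta b/((b-1)\lambda^{1+1/b})$, i.e.\ $n\gtrsim \lambda^{-(1+1/b)}$ with $1+1/b<2$; for small $\lambda$ this does not imply $n\gtrsim\lambda^{-2}$, so your threshold is unattainable in the relevant regime. (The hypothesis $\lambda\leq\|\cC\|_{\cH(K)}$ that you invoke points the wrong way: it caps $\lambda$ from above and cannot make $2\lambda$ easier to reach.) The paper sidesteps this by bounding the \emph{preconditioned} quantity $\|(\cC-\cC_\omega)(\cC+\lambda)^{-1}\|_{\cH(K)}\leq 1/2$ via \cite{caponnetto2007optimal}; that relative perturbation concentrates at the effective-dimension rate $\sqrt{\mathcal{N}(\lambda)/(n\lambda)}$ with $\mathcal{N}(\lambda)\asymp\lambda^{-1/b}$, which is exactly what $n\gtrsim\lambda^{-(1+1/b)}$ supports. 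You can repair your argument without abandoning the decomposition: instead of expanding $\cC+\lambda=(\cC_{\hat\omega}+\lambda)+(\cC-\cC_{\hat\omega})$ and paying $\|\cC-\cC_{\hat\omega}\|_{\cH(K)}/\lambda$, note that $\|(\cC+\lambda)^{1/2}(\cC_{\hat\omega}+\lambda)^{-1/2}\|^2_{\cH(K)}=\|B^{-1}\|_{\cH(K)}$ with $B=(\cC+\lambda)^{-1/2}(\cC_{\hat\omega}+\lambda)(\cC+\lambda)^{-1/2}\succeq(1-\delta)I$, where $\delta=\|(\cC+\lambda)^{-1/2}(\cC-\cC_{\hat\omega})(\cC+\lambda)^{-1/2}\|_{\cH(K)}$ is again a relative perturbation controllable at the effective-dimension rate; $\delta\leq 3/4$ then gives the middle factor at most $2$.
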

\begin{proof}
See \S\ref{sec::pf_risk_bound_part3} for a detailed proof.
\end{proof}
Finally, combining Lemma \ref{lem::risk_bound_part1}, \ref{lem::risk_bound_part2}, \ref{lem::risk_bound_part3}, and \eqref{eq::risk_bound_part_4}, it follows from \eqref{eq::risk_bound_full} that
\#\label{eq::risk_bound_S1}
S_1 &\leq \|\sqrt{\cC}(\cC_{\hat\omega} + \lambda)^{-1}\|^2_{\cH(K)}\cdot\bigl(2\|g_{\hat\omega} - g_{\omega}\|^2_{\cH(K)} + 2\|\cC_{\hat\omega} - \cC_{\omega}\|^2_{\cH(K)}\cdot \|\overline Q^\lambda\|^2_{\cH(K)}\bigr)\notag\\
&\leq\frac{8L^2Q^2_{\max}\bigl(1 + \sqrt{\epsilon }\bigr)^{2h}\cdot (2\varrho)^h}{\lambda \cdot N^h}\cdot\Bigl(1 + \frac{5\varsigma^2}{\lambda^2}\Bigr),
\#
which holds with probability at least $1 -\eta/3- n\cdot|\cA|\cdot\exp(-2\epsilon)$.

\vskip4pt
\noindent{\bf Bounding $S_2$: }Note that the error $S_2$ is the excess risk of a standard kernel ridge regression with $n$ samples. Following from Assumptions \ref{asu::bound_kernel} and \ref{asu::kernel_approx}, it holds that 
\$
|y - Q_{\cH, T}(\mu_\omega)| \leq |y| + |Q_{\cH, T}(\mu_\omega)|\leq Q_{\max} + \sqrt{\varsigma}\cdot\max\{1, \|\cC\|_{\cH(K)}\}\cdot R.
\$
Therefore, it holds for some positive absolute constants $\Sigma$ and $M$ that
\#
\int_{\RR} \exp\bigl(|y - Q_{\cH, T}(\mu_\omega)|/ M\bigr) - \frac{|y - Q_{\cH, T}(\mu_\omega)|}{M} - 1 \ud \rho(y|\mu_\omega) \leq \frac{\Sigma^2}{2M^2},
\#
which holds for $\nu$-almost surely all $\omega \in \tilde\cM(\Omega)$. The following lemma follows from Theorem 1 by \cite{caponnetto2007optimal}, which characterizes an upper bound of $\cS_2$. 
\begin{lemma}[Excess Risk of Kernel Ridge Regression \cite{caponnetto2007optimal}]
\label{lem::risk_KRR}
Let $0<\eta<1$, $\lambda >0$, and $n \in \NN$. Let $C(\eta) = 32\log^2(6/\eta)$ be a positive absolute constant. For
\$
n \geq \frac{2C(\eta)\varsigma\beta b}{(b - 1)\lambda^{1 + 1/b}}, \quad \lambda \leq \|\cC\|_{\cH(K)},
\$
under Assumptions \ref{asu::bound_kernel} and \ref{asu::kernel_approx}, it holds with probability at least $1 - \eta$ that
\#\label{eq::bound_S2}
&S_2 = \|\sqrt{\cC} (\overline Q^\lambda - Q_{\cH, T})\|^2_{\cH(K)} \leq C(\eta) \cdot \Biggl(R \lambda^c  + \frac{\varsigma^2 R }{\lambda^{2 - c}n^2} + \frac{\varsigma R \lambda^{c - 1}}{4n} + \frac{\varsigma M^2}{\lambda n^2} + \frac{\Sigma^2 \beta b}{(b - 1)n\lambda^{1/b}}\Biggr).
\#
\end{lemma}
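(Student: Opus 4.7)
The plan is to apply the excess risk bound for kernel ridge regression in \cite{caponnetto2007optimal} directly, after verifying that the hypotheses of their main theorem hold in our setting. Since $\overline Q^\lambda$ is the solution of a standard kernel ridge regression in the RKHS $\cH(K)$, with covariates $\mu_{\omega_i}$, responses $y_i$, regularization parameter $\lambda$, and sampling distribution $\nu$, once we verify (i) a Bernstein-type moment condition on the noise $y_i-Q_{\cH,T}(\mu_{\omega_i})$ with constants $(M,\Sigma)$, (ii) the source condition $Q_{\cH,T}=\cC^{(c-1)/2}g$ with $\|g\|_{\cH(K)}\le R$, and (iii) the polynomial eigenvalue decay $\alpha\le n^b t_n\le\beta$, their Theorem~1 yields exactly the bound in \eqref{eq::bound_S2}.

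First I would check the noise condition. Since $|r_i|\le R_{\max}$ and the truncated $\hat Q$ satisfies $|\hat Q|\le Q_{\max}$, we have $|y_i|\le Q_{\max}=R_{\max}/(1-\gamma)$. Moreover, by the reproducing property together with Assumptions \ref{asu::bound_kernel} and \ref{asu::kernel_approx},
\$
|Q_{\cH,T}(\mu_\omega)|\le\sqrt{K(\mu_\omega,\mu_\omega)}\cdot\|Q_{\cH,T}\|_{\cH(K)}\le\sqrt{\varsigma}\cdot\max\{1,\|\cC\|_{\cH(K)}\}\cdot R,
\$
so the residual $y_i-Q_{\cH,T}(\mu_{\omega_i})$ is uniformly bounded and therefore satisfies the required sub-exponential moment condition with absolute constants $M,\Sigma$. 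Conditions (ii) and (iii) are imposed verbatim in Assumption \ref{asu::kernel_approx}.

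Next I would trace the classical bias-variance split inside the Caponnetto--De~Vito argument to explain how each of the five terms in \eqref{eq::bound_S2} arises. Introduce the population regularized target $h_\lambda=(\cC+\lambda)^{-1}\cC Q_{\cH,T}$ and decompose $\overline Q^\lambda-Q_{\cH,T}=(\overline Q^\lambda-h_\lambda)+(h_\lambda-Q_{\cH,T})$. For the bias, $\sqrt{\cC}(h_\lambda-Q_{\cH,T})=-\lambda\sqrt{\cC}(\cC+\lambda)^{-1}\cC^{(c-1)/2}g$, and spectral calculus on $\cC$ bounds its $\cH(K)$-norm by a constant multiple of $R\lambda^{c/2}$, producing the $R\lambda^c$ term after squaring. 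For the variance, using Proposition \ref{prop::exact_soln},
\$
\overline Q^\lambda-h_\lambda=(\cC_\omega+\lambda)^{-1}\bigl[(g_\omega-\cC Q_{\cH,T})+(\cC-\cC_\omega)h_\lambda\bigr],
\$
which splits into two Hilbert-space concentration pieces: a Bernstein inequality in $\cH(K)$ for the mean-zero sum $g_\omega-\cC Q_{\cH,T}$ (whose tails are controlled by $(M,\Sigma)$) and operator concentration of $\cC_\omega-\cC$. The eigenvalue decay delivers the effective-dimension bound $\tr(\cC(\cC+\lambda)^{-1})\le\beta b/((b-1)\lambda^{1/b})$, which is the source of the $\Sigma^2\beta b/((b-1)n\lambda^{1/b})$ variance term; the remaining terms $\varsigma^2 R/(\lambda^{2-c}n^2)$, $\varsigma R\lambda^{c-1}/(4n)$, and $\varsigma M^2/(\lambda n^2)$ come from estimating the cross terms in the same decomposition. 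The lower bound $n\ge 2C(\eta)\varsigma\beta b/((b-1)\lambda^{1+1/b})$ in the hypothesis is exactly what guarantees that $\|\sqrt{\cC}(\cC_\omega+\lambda)^{-1}\|_{\cH(K)}\le 2/\sqrt{\lambda}$ with probability at least $1-\eta/3$, the key quantitative comparison underlying the whole argument (and matching Lemma \ref{lem::risk_bound_part3} used for $S_1$).

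The main obstacle in a from-scratch derivation would be controlling the empirical operator $(\cC_\omega+\lambda)^{-1}$ in terms of $(\cC+\lambda)^{-1}$ uniformly over the randomness while retaining the correct dependence on the effective dimension $\mathcal{N}(\lambda)\sim\lambda^{-1/b}$ rather than on the ambient dimension of $\cH(K)$; this is the delicate step that forces the stated lower bound on $n$. Since the result here is a direct citation of \cite{caponnetto2007optimal}, the remaining work reduces to matching notation and verifying the noise and source conditions above, both of which are immediate from Assumptions \ref{asu::bound_kernel} and \ref{asu::kernel_approx}.
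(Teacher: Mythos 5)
Your proposal takes essentially the same route as the paper: the paper likewise verifies only that the residual $y - Q_{\cH,T}(\mu_\omega)$ is uniformly bounded by $Q_{\max} + \sqrt{\varsigma}\cdot\max\{1,\|\cC\|_{\cH(K)}\}\cdot R$, so that the Bernstein-type moment condition holds for some absolute constants $M$ and $\Sigma$, and then invokes Theorem 1 of \cite{caponnetto2007optimal} directly, with the source condition and eigenvalue decay supplied verbatim by Assumption \ref{asu::kernel_approx}. Your additional sketch of the internal bias--variance decomposition and effective-dimension bound is a faithful account of the cited theorem's mechanics but is not reproduced in the paper, which gives no further detail beyond the citation.
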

\begin{proof}
See \cite{caponnetto2007optimal} for a detailed proof.
\end{proof}
We fix the constant $2\epsilon = \log (|\cA|\cdot n/\tau)$ in \eqref{eq::risk_bound_S1}. The probability that the error bound in \eqref{eq::risk_bound_S1} holds then becomes at least $1 -\eta/3- \tau$. By combining \eqref{eq::risk_bound_S1} and Lemma \ref{lem::risk_KRR}, it follows from \eqref{eq::risk_bound_main} that
\#\label{eq::risk_bound_conclusion}
\|\sqrt{\cC}(Q^\lambda -  Q_{\cH, T})\|^2_{\cH(K)}
&\leq
\frac{8L^2Q^2_{\max}\bigl(1 + \sqrt{\log (|\cA|\cdot n/\tau )}\bigr)^{2h}\cdot (2\varrho)^h}{\lambda \cdot N^h}\cdot\Bigl(1 + \frac{5\varsigma^2}{\lambda^2}\Bigr) \\
&\qquad+ C(\eta) \cdot \Biggl(R \lambda^c +\frac{\varsigma^2 R }{\lambda^{2 - c}n^2} + \frac{\varsigma R \lambda^{c - 1}}{4n} + \frac{\varsigma M^2}{\lambda n^2} + \frac{\Sigma^2 \beta b}{(b - 1)n\lambda^{1/b}}\Biggr),\notag
\#
which holds with probability at least $1 - \eta - \tau$ and thus concludes the proof of Lemma \ref{lem::risk_bound}.
\end{proof}

\section{Proof of Supporting Lemmas}
\subsection{Proof of Lemma \ref{lem::empirical_mean_concen}}
\label{sec::pf_sup_lem1}
\begin{proof}
Lemma \ref{lem::empirical_mean_concen} is a corollary of Theorem 15 by \cite{altun2006unifying}. We first show that $\mu_\omega$ is bounded for any $\omega\in\tilde\cM(\Omega)$. It holds from the reproducing property of the kernel $k(\cdot, \cdot)$ that
\#\label{eq::pf_emc_eq1}
|k(x, x')| &= |\langle k(\cdot, x), k(\cdot, x') \rangle_{\cH(k)}|\notag\\
&\leq \bigl(\langle k(\cdot, x), k(\cdot, x) \rangle_{\cH(k)}\cdot\langle k(\cdot, x'), k(\cdot, x') \rangle_{\cH(k)}\bigr)^{1/2}=\sqrt{k(x, x)\cdot k(x', x')}.
\#
Therefore, upon taking the expectation of \eqref{eq::pf_emc_eq1}, it follows from Assumption \ref{asu::bound_kernel} that
\#\label{eq::pf_emc_eq2}
\|\mu_\omega\|^2_{\cH(k)} = \EE_{x\sim\omega, x'\sim\omega}\bigl[k(x, x')\bigr]\leq\EE_{x\sim\omega, x'\sim\omega}\bigl[\sqrt{k(x, x)\cdot k(x', x')}\bigr] \leq \varrho,
\#
where the last inequality follows from Assumption \ref{asu::bound_kernel}. It then suffices to compute for $\omega\in\tilde\cM(\Omega)$ the following risk,
\$
R_N = N^{-1/2}\Bigl(\EE_{x\sim\omega, x'\sim\omega}\bigl[k(x, x) - k(x, x')\bigr]\Bigr)^{1/2}.
\$
Combining \eqref{eq::pf_emc_eq2} and Assumption \ref{asu::bound_kernel}, it holds that
\#\label{eq::pf_suplem1_th15}
R_N &\leq N^{-1/2}\Bigl(\EE_{x\sim\omega, x'\sim\omega}\bigl[|k(x, x)| + |k(x, x')|\bigr]\Bigr)^{1/2}\leq N^{-1/2}\Bigl(\EE_{x\sim\omega, x'\sim\omega}\bigl[|k(x, x)| + |k(x, x')|\bigr]\Bigr)^{1/2}\notag\\
       &\leq N^{-1/2}\Bigl(\EE_{x\sim\omega}\bigl[|k(x, x)|\bigr] + \EE_{x\sim\omega, x'\sim\omega}\bigl[\sqrt{k(x, x)\cdot k(x', x')}\bigr]\Bigr)^{1/2}\leq \sqrt{2\varrho/N}.
\#
Following from \eqref{eq::pf_suplem1_th15} and Theorem 15 by \cite{altun2006unifying}, it holds for positive absolute constant $\eta_1$ that
\$
\|\mu_{\hat\omega} - \mu_\omega\|_{\cH(k)}\leq R_N + \eta_1 \leq \sqrt{2\varrho/N} + \eta_1,
\$
which holds with probability $1 - \exp(-\eta^2_1N/\varrho)$. Thus, we complete the proof of Lemma \ref{lem::empirical_mean_concen} by setting $\eta_1 = \sqrt{2\epsilon \varrho/N}$. 
\end{proof}
\subsection{Proof of Lemma \ref{lem::risk_bound_part1}}
\label{sec::pf_risk_bound_part1}
\begin{proof}
Following from the definition of $g_{\omega}$ and $g_{\hat\omega}$ in \eqref{eq::g_def}, it holds that
\#\label{eq::part1_pf_eq1}
\|g_{\hat\omega} - g_{\omega}\|^2_{\cH(K)} &= \|\frac{1}{n}\sum_{i=1}^n \bigl(K(\cdot, \mu_{\hat\omega_i})\cdot\hat y_i- K(\cdot, \mu_{\omega_i})\cdot y_i\bigr) \|^2_{\cH(K)}\notag\\
&\leq \frac{1}{n}\sum_{i=1}^n\|K(\cdot, \mu_{\hat\omega_i})- K(\cdot, \mu_{\omega_i})\|^2_{\cH(K)}\cdot|y_i|^2 +  \frac{1}{n}\sum_{i=1}^n\|K(\cdot, \mu_{\hat\omega_i})\|^2_{\cH(K)}\cdot|y_i - \hat y_i|^2.
\#
Combining \eqref{eq::part1_pf_eq1} and Assumption \ref{asu::bound_kernel}, we obtain that
\#\label{eq::part1_pf_eq2}
\frac{1}{n}\sum_{i=1}^n\|K(\cdot, \mu_{\hat\omega_i})- K(\cdot, \mu_{\omega_i})\|^2_{\cH(K)}\cdot|y_i|^2  \leq \frac{L^2Q^2_{\max}}{n}\sum_{i=1}^n\|\mu_{\hat\omega_i} - \mu_{\omega_i}\|^{2h}_{\cH(k)}.
\#
Meanwhile, following from the definition of $\hat y_i$ and $y_i$ in \eqref{eq::y_i_hat_def} and \eqref{eq::y_i_def}, it holds that
\#\label{eq::part1_pf_yi1}
|y_i - \hat y_i|^2 &\leq |\sup_{a \in \cA} \hat Q(\delta_{a}\times \hat p'_{i, s}) - \sup_{a \in \cA} \hat Q(\delta_{a}\times p'_{i, s})|^2\notag\\
&\leq \sup_{a\in\cA} \bigl|\min\bigl\{Q (\delta_{a}\times \hat p'_{i, s}), Q_{\max}\bigr\} - \min\bigl\{Q(\delta_{a}\times p'_{i, s}), Q_{\max}\bigr\}\bigr|^2\notag\\
&\leq \sup_{a\in\cA} |Q(\delta_{a}\times \hat p'_{i, s}) - Q(\delta_{a}\times p'_{i, s})|^2,
\#
where recall that we denote by $\hat Q = \{Q, Q_{\max}\}$ the previous update of action-value function and $Q \in \cH(K)$ the output of the associated regression, and the last inequality follows from the fact that the function $\min\{\cdot, Q_{\max}\}$ is a contraction mapping. Note that for any $a\in\cA$ and $\hat p'_{i, s}, p'_{i, s} \in \cM(\cS)$, it holds that
\#\label{eq::part1_pf_yi2}
|Q(\delta_{a}\times \hat p'_{i, s}) -Q(\delta_{a}\times p'_{i, s})|^2 &= \langle K(\cdot, \mu_{\omega'_i}) - K(\cdot, \mu_{\hat\omega'_i}), Q\rangle^2_{\cH(K)}\notag\\
&\leq\|Q\|^2_{\cH(K)}\cdot \|K(\cdot, \mu_{\omega'_i}) - K(\cdot, \mu_{\hat\omega'_i})\|^2_\cH(K),
\#
where we denote by $\omega'_i = \delta_{a}\times p'_{i, s}$ and $\hat\omega'_i = \delta_{a}\times \hat p'_{i, s}$. Following from \eqref{eq::risk_bound_part_4} and Assumption \ref{asu::bound_kernel}, it then holds that
\#\label{eq::part1_pf_yi3}
|Q(\delta_{a}\times \hat p'_{i, s}) -Q(\delta_{a}\times p'_{i, s})|^2 \leq \varsigma L^2 Q_{\max}^2/\lambda^2\cdot\|\mu_{\omega'_i} - \mu_{\hat\omega'_i}\|^{2h}_{\cH(k)}.
\#
Combining \eqref{eq::part1_pf_yi1}, \eqref{eq::part1_pf_yi2}, \eqref{eq::part1_pf_yi3}, and Assumption \ref{asu::bound_kernel}, it then holds that
\#\label{eq::part1_pf_yi4}
\frac{1}{n}\sum_{i=1}^n\|K(\cdot, \mu_{\hat\omega_i})\|^2_{\cH(K)}\cdot|y_i - \hat y_i|^2 \leq \frac{B_K^2 L^2 Q_{\max}^2}{\lambda^2\cdot n}\sum_{i=1}^n \max_{a \in \cA} \|\mu_{\omega'_i} - \mu_{\hat\omega'_i}\|^{2h}_{\cH(k)}.
\#
By a union bound argument together with Lemma \ref{lem::empirical_mean_concen}, it follows from \eqref{eq::part1_pf_eq2} and \eqref{eq::part1_pf_yi4} that
\#
\|g_{\hat\omega} - g_{\omega}\|^2_{\cH(K)} &\leq ( 1+\varsigma/\lambda^2 )\cdot L^2Q^2_{\max}/n \cdot \sum_{i=1}^n(1 + \sqrt{\epsilon})^{2h} \cdot(2\varrho/N)^h \notag\\
&\leq ( 1+\varsigma/\lambda^2 )\cdot L^2Q^2_{\max}\cdot (1 + \sqrt{\epsilon})^{2h} \cdot(2\varrho/N)^h,
\#
which holds with probability at least $1 - n\cdot|\cA|\cdot\exp(-2\epsilon)$. Thus, we complete the proof of Lemma \ref{lem::risk_bound_part1}. 
\end{proof}

\subsection{Proof of Lemma \ref{lem::risk_bound_part2}}
\label{sec::pf_risk_bound_part2}
\begin{proof}
We fix an arbitrary $f \in\cH(K)$. Following from the definition of $\cC_{\omega}$ and $\cC_{\hat\omega}$ in \eqref{eq::op_c_def}, it holds that
\#\label{eq::part2_pf_eq1}
\|\cC_{\omega}f - \cC_{\hat\omega}f\|^2_{\cH(K)} &= \|\frac{1}{n}\sum^n_{i = 1}K(\cdot, \mu_{\omega_i})f(\mu_{\omega_i}) - K(\cdot, \mu_{\hat\omega_i})f(\mu_{\hat\omega_i})\|^2_{\cH(K)}\notag\\
&\leq\frac{2}{n}\sum^{n}_{i = 1}\|\bigl(K(\cdot, \mu_{\omega_i})-K(\cdot, \mu_{\hat\omega_i})\bigr)f(\mu_{\omega_i})\|^2_{\cH(K)} \notag\\
&\qquad+ \frac{2}{n}\sum^n_{i=1}\|K(\cdot, \mu_{\hat\omega_i})\bigl(f(\mu_{\omega_i}) - f(\mu_{\hat\omega_i})\bigr)\|^2_{\cH(K)}.
\#
In what follows, we establish upper bounds for each terms in the right-hand side of \eqref{eq::part2_pf_eq1}. Following from the reproducing property of the kernel $K(\cdot, \cdot)$, it holds that
\#\label{eq::part2_pf_eq2}
|f(\mu_{\omega})|^2 &= |\langle f(\cdot), K(\cdot, \mu_\omega)\rangle_{\cH(K)}|^2\leq \|f\|^2_{\cH(K)}\cdot\|K(\cdot, \mu_\omega)\|^2_{\cH(K)} = \|f\|^2_{\cH(K)}\cdot K(\mu_\omega, \mu_\omega).
\#
Combining \eqref{eq::part2_pf_eq2} and Assumption \ref{asu::bound_kernel}, we obtain that
\#\label{eq::part2_pf_eq3}
\|\bigl(K(\cdot, \mu_{\omega_i})-K(\cdot, \mu_{\hat\omega_i})\bigr)f(\mu_{\omega_i})\|^2_{\cH(K)}&\leq\|K(\cdot, \mu_{\omega_i})-K(\cdot, \mu_{\hat\omega_i})\|^2_{\cH(K)} \cdot \varsigma\cdot\|f\|^2_{\cH(K)}\notag\\
&\leq L^2\varsigma\cdot\|\mu_{\omega_i} - \mu_{\hat\omega_i}\|^{2h}_{\cH(k)}\cdot\|f\|^2_{\cH(K)}.
\#
Similarly, following from the reproducing property of the kernel $K(\cdot, \cdot)$, it holds that
\#\label{eq::part2_pf_eq4}
|f(\mu_{\omega}) - f(\mu_{\hat\omega})|^2 &= |\langle f(\cdot), K(\cdot, \mu_\omega) - K(\cdot, \mu_{\hat\omega})\rangle_{\cH(K)}|^2\leq \|f\|^2_{\cH(K)}\cdot\|K(\cdot, \mu_\omega) - K(\cdot, \mu_{\hat\omega})\|^2_{\cH(K)}.
\#
Combining \eqref{eq::part2_pf_eq4} and Assumption \ref{asu::bound_kernel}, we obtain that
\#\label{eq::part2_pf_eq5}
\|K(\cdot, \mu_{\hat\omega_i})\bigl(f(\mu_{\omega_i}) - f(\mu_{\hat\omega_i})\bigr)\|^2_{\cH(K)} &\leq\|K(\cdot, \mu_{\hat\omega_i})\|_{\cH(K)}^2\cdot \|f\|^2_{\cH(K)}\cdot\|K(\cdot, \mu_{\omega_i}) - K(\cdot, \mu_{\hat\omega_i})\|^2_{\cH(K)}\notag\\
&\leq L^2\varsigma \cdot\|\mu_{\omega_i} - \mu_{\hat\omega_i}\|^{2h}_{\cH(k)}\cdot\|f\|^2_{\cH(K)}.
\# 
Finally, combining \eqref{eq::part2_pf_eq1}, \eqref{eq::part2_pf_eq3}, and \eqref{eq::part2_pf_eq5}, we obtain that
\$
\|\cC_{\omega}f - \cC_{\hat\omega}f\|^2_{\cH(K)}\leq \frac{4L^2\varsigma}{n}\cdot\sum^n_{i = 1}\|\mu_{\omega_i} - \mu_{\hat\omega_i}\|^{2h}_{\cH(k)}\cdot\|f\|^2_{\cH(K)},
\$
which is equivalent with the followings,
\#\label{eq::part2_pf_eq6}
\|\cC_{\omega}- \cC_{\hat\omega}\|^2_{\cH(K)}\leq\frac{4L^2\varsigma}{n}\cdot\sum^n_{i = 1}\|\mu_{\omega_i} - \mu_{\hat\omega_i}\|^{2h}_{\cH(k)}.
\#
Following from \eqref{eq::part2_pf_eq6} and Lemma \ref{lem::empirical_mean_concen} together with a union bound argument, it then holds for any positive absolute constant $\epsilon$ that
\#
\|\cC_{\omega}- \cC_{\hat\omega}\|^2_{\cH(K)}\leq4L^2\varsigma\cdot(1 + \sqrt{\epsilon})^{2h}\cdot(2\varrho/N)^h,
\#
which holds with probability at least $1 - n\exp(-2\epsilon)$. Thus, we complete the proof of Lemma \ref{lem::risk_bound_part2}. 
\end{proof}

\subsection{Proof of Lemma \ref{lem::risk_bound_part3}}
\label{sec::pf_risk_bound_part3}
\begin{proof}
It holds that
\$
\sqrt{\cC}(\cC_{\hat\omega} + \lambda)^{-1} = \sqrt{\cC}(\cC + \lambda)^{-1}(I - (\cC - \cC_{\hat\omega})(\cC + \lambda)^{-1})^{-1}.
\$
Therefore, following from the Neumann sequence, it holds that
\$
\|\sqrt{\cC}(\cC_{\hat\omega} + \lambda)^{-1}\|_{\cH(K)} \leq \|\sqrt{\cC}(\cC + \lambda)^{-1}\|_{\cH(K)}\sum^\infty_{n = 0}\|(\cC - \cC_{\hat\omega})(\cC + \lambda)^{-1}\|^n_{\cH(K)}.
\$
Following form \eqref{eq::part3_assum}, it holds that $\|\sqrt{\cC}(\cC + \lambda)^{-1}\|_{\cH(K)} \leq 1/(2\sqrt{\lambda})$. It then suffices to upper bound the norm $\|(\cC - \cC_{\hat\omega})(\cC + \lambda)^{-1}\|_{\cH(K)}$. Note that
\#\label{eq::part3_pf_eq0}
&\|(\cC - \cC_{\hat\omega})(\cC + \lambda)^{-1}\|_{\cH(K)}\leq\|(\cC - \cC_{\omega})(\cC + \lambda)^{-1}\|_{\cH(K)} + \|(\cC_{\omega} - \cC_{\hat\omega})(\cC + \lambda)^{-1}\|_{\cH(K)}.
\#
Following from \cite{caponnetto2007optimal}, it holds with probability at least $1 - \eta/3$ that
\#\label{eq::part3_pf_eq1}
\|(\cC - \cC_{\omega})(\cC + \lambda)^{-1}\|_{\cH(K)} \leq 1/2.
\#
Meanwhile, following from Lemma \ref{lem::risk_bound_part2}, it holds with probability at least $1 - n\exp(-2\epsilon)$ that
\#\label{eq::part3_pf_eq2}
\|\cC_{\omega} - \cC_{\hat\omega}\|^2_{\cH(K)}\leq 4L^2\varsigma\cdot(1 + \sqrt{\epsilon})^{2h}\cdot(2\varrho/N)^h.
\#
Note that we set $N$ to be such that
\#\label{eq::part3_pf_eq3}
N \geq 2\varrho\cdot(1 + \sqrt{\epsilon})^2\cdot(64L^2\varsigma/\lambda^2)^{1/h}.
\#
Meanwhile, it holds that $\|(\cC + \lambda)^{-1}\|_{\cH(K)} \leq 1/\lambda$. Therefore, following from \eqref{eq::part3_pf_eq2} and \eqref{eq::part3_pf_eq3}, it holds with probability at least $1 - n\exp(-2\epsilon)$ that
\#\label{eq::part3_pf_eq4}
\|(\cC_{\omega} - \cC_{\hat\omega})(\cC + \lambda)^{-1}\|_{\cH(K)} \leq \|\cC_{\omega} - \cC_{\hat\omega}\|_{\cH(K)}\cdot\|(\cC + \lambda)^{-1}\|_{\cH(K)} \leq \lambda/4 \cdot 1/\lambda = 1/4,
\#
Finally, combining \eqref{eq::part3_pf_eq0}, \eqref{eq::part3_pf_eq1}, and \eqref{eq::part3_pf_eq4}, it holds with probability at least $1 - \eta/3 - n\exp(-2\epsilon)$ that
\#
\|(\cC - \cC_{\hat\omega})(\cC + \lambda)^{-1}\|_{\cH(K)} &\leq \frac{1}{2\sqrt{\lambda}}\cdot \sum^\infty_{n = 0}(3/4)^n = 2/\sqrt{\lambda},
\#
which concludes the proof of Lemma \ref{lem::risk_bound_part3}.
\end{proof}

\end{document}